\pgfplotsset{compat=1.18}
\newcommand{\opt}{^\star}
\newcommand{\Real}{\mathbb{R}}
\newcommand{\argmax}{\mathop{\mathrm{arg\,max}}\limits}
\newcommand{\argmin}{\mathop{\mathrm{arg\,min}}\limits}
\DeclareMathOperator{\cvaro}{CVaR}
\DeclareMathOperator{\ess}{ess}
\newcommand{\cvar}[2]{\cvaro_{#1} \left[#2\right]}
\newcommand{\cvarEnv}[2]{\Xi_{#1}\left(#2\right)}
\newcommand{\ind}[1]{\mathds{1}{\left[{#1}\right]}}  % Indicator function
\newcommand{\states}{\mathcal{S}}
\newcommand{\actions}{\mathcal{A}}
\newcommand{\V}[2]{\bm \upsilon^{#1}_{#2}}
\renewcommand*{\cite}[2][]{\citep[#1]{#2}}
\definecolor{darkspringgreen}{rgb}{0.09, 0.45, 0.27}
\definecolor{darkpastelblue}{rgb}{0.47, 0.62, 0.8}
\definecolor{darkslateblue}{rgb}{0.28, 0.24, 0.55}
\definecolor{azure(colorwheel)}{rgb}{0.0, 0.5, 1.0}
\definecolor{warmorange}{RGB}{217, 95, 2}    % A warm orange
\definecolor{awesome}{rgb}{1.0, 0.13, 0.32}
\newtheorem{remark}{Remark}
\newtheorem{theorem}{Theorem}
\newtheorem{definition}{Definition}
\newtheorem{corollary}{Corollary}
\newtheorem{assumptions}{Assumptions}
\title{On the Fundamental Limitations of Dual Static CVaR\\ Decompositions in Markov Decision Processes}
\author{\name Mathieu Godbout \email mathieu.godbout.3@ulaval.ca \\
       \addr Department of Computer Science\\
       Université Laval, Canada
       % \AND
       % \name Maxime Heuillet \email maxime.heuillet.1@ulaval.ca \\
       % \addr Department of Computer Science\\
       % Université Laval\\
       % Québec, QC, Canada
       \AND
       \name Audrey Durand \email audrey.durand@ift.ulaval.ca\\
       \addr Canada-CIFAR AI Chair\\Department of Computer Science\\
       Université Laval, Canada}
\begin{document}

\maketitle

\begin{abstract}%
%Recent work has shown that dynamic programming (DP) methods for finding static CVaR-optimal policies in Markov Decision Processes (MDPs) can fail when based on the dual formulation, yet the root cause has remained unclear. We expand on these findings by shifting focus from policy optimization to the seemingly simpler task of policy evaluation. We show that evaluating the static CVaR of a given policy can be framed as two distinct minimization problems. We show that this evaluation can fail, leading to a non-zero \textbf{CVaR evaluation gap}, and that this failure is the root cause of the issues observed in DP-based optimization..  We then leverage this risk-assignment perspective to prove that the currently proposed dual CVaR decomposition itself \mathieu{It is only that the decomposition cannot accomodate one single optimal policy for all $\alpha$. This in itself is not problematic; this is what every other method does, train for a single $\alpha$.} is fundamentally limited, identifying an MDP where no single policy can be optimal across all initial risk levels.
%
It was recently shown that dynamic programming (DP) methods for finding static CVaR-optimal policies in Markov Decision Processes (MDPs) can fail when based on the dual formulation, yet the root cause of this failure remains unclear. We expand on these findings by shifting focus from policy optimization to the seemingly simpler task of policy evaluation. We show that evaluating the static CVaR of a given policy can be framed as two distinct minimization problems. We introduce a set of ``risk-assignment consistency constraints'' that must be satisfied for their solutions to match %For their solutions to match, a set of ``risk-assignment consistency constraints'' must be satisfied, 
and we demonstrate that an empty intersection of these constraints is the source of previously observed evaluation errors. Quantifying the evaluation error as the \emph{CVaR evaluation gap}, we demonstrate that the issues observed when optimizing over the dual-based CVaR DP are explained by the returned policy having a non-zero CVaR evaluation gap. Finally, we leverage our proposed risk-assignment constraints perspective to prove that the search for a single, uniformly optimal policy on the dual CVaR decomposition is fundamentally limited, identifying an MDP where no single policy can be optimal across all initial risk levels.

\end{abstract}

%\begin{keywords} 
%risk-sensitive reinforcement learning, dynamic programming, conditional value-at-risk, Markov decision process
%\end{keywords}

% Uncomment the following to link to your code, datasets, an extended version or similar.
%
% \begin{links}
%     \link{Code}{https://aaai.org/example/code}
%     \link{Datasets}{https://aaai.org/example/datasets}
%     \link{Extended version}{https://aaai.org/example/extended-version}
% \end{links}

\section{Introduction}
\label{sec:intro}

% \mathieu{Copy-pasted from Neurips attempt, still TODO}

% \mathieu{Grandes lignes: we need risk-sensitive policies, optimal risk-sensitive being history-dependent makes DP formulation hard, two currently proposed DP - primal and dual CVaR based, primal works but requires augmented the state with a continuous variable - harder to implement, dual formulation more promixing because only needs to augment risk between in $(0, 1]$ but current proposed algorithm was refuted on an MDP}

% \mathieu{Our goal: Investigate the properties of the dual-based decomposition to better understand its limitations.}

% \mathieu{Contributions: (i) Prove a formal analysis framework through which the DP and static CVaR computation can be analyzed. (ii) Identify the conditions under which the CVaR evaluation of a policy using the dual formulation DP is accurate. Exemple a l'appui. (iii) Define how varying optimal history-dependent policies for varying initial risk levels impose independent constraints on the Dynamic Program. (iv) Show an example where these initial constraints cannot be simultaneously satisfied, proving the impossibility of a single risk-dependent policy to be optimal for all initial risk levels.}

The goal of reinforcement learning (RL)~\citep{Sutton2018Reinforcement} is to learn (sequential) decision-making policies such as to maximize some outcome (return) in a given environment, typically modeled as a Markov decision process (MDP).
This is usually approached from the objective of maximizing the \textit{expected} return, which has led to impressive successes in games~\citep{Silver2018General,Vinyals2019Grandmaster} and content recommendation~\citep{Li2010Contextual}.
However, in safety-critical domains like healthcare, autonomous driving, or financial planning, some erroneous actions may lead to disastrous consequences. For instance, in the task of identifying the shortest path to an organ for surgery, paths at high risk of endangering the patient (e.g., as they are too close to an artery, a nerve, or a critical region of the brain), should be avoided~\citep{Baek2018Path}. 
Automation in this context and other safety-critical domains therefore requires safe decision-making policies~\citep{Gottesman2019Guidelines}. This can be achieved by optimizing a \textit{risk-averse} objective instead of simply maximizing the expected return~\citep{Artzner1999Coherent}. 
In particular, conditional value-at-risk (CVaR), which is considered a gold standard risk measure in banking regulations \citep{BaselIII2019}, has received a lot of focus in risk-averse RL \citep{Prashanth2022Risk}.

More specifically, the \textit{static} CVaR evaluation consists of computing the CVaR of a policy's cumulative random return. Unfortunately, optimizing a policy w.r.t. the static CVaR objective in MDPs has proven to be quite challenging since CVaR suffers from time inconsistency~\citep{Pflug2016Time,Gagne2021Two} and optimal policies may be history-dependent~\citep{Shapiro2014Lectures}. To tackle these problems, prior work has considered dynamic programs (DPs) applied on augmented state spaces in both the primal and dual representations of risk measures. Working under the primal representation, states are augmented by keeping track of the running cumulative return~\citep{Boda2004Stochastic,Bauerle2011Markov,Chow2014Algorithms,Chow2018Risk}. However, such primal-based methods are considered practically inefficient since they require computing the value function on an unbounded continuous state space~\citep{Chow2015Risk,Chapman2021Risk,Li2022Quantile}. The dual representation has therefore been identified as a promising direction, where the sequential decomposition of risk measures can be leveraged~\citep{Chow2015Risk,Pflug2016Time}. States here are augmented by keeping track of the current risk level (between 0 and 1). Although the resulting augmented state space is still continuous, discretization can be applied efficiently since risk levels are bounded~\citep{Chow2015Risk,Li2022Quantile}. Hence, \citet{Chow2015Risk} proposed a Value Iteration (VI) procedure for CVaR in the dual representation, which served as the basis for many later developments in the field~\citep{Chow2015Risk,Chapman2019Risk,Stanko2019Risk,Chapman2021Risk,Rigter2021Risk,Ding2022Cvar,Ding2022Sequential}, until \citet{Hau2023Dynamic} showed counterexample MDPs where this procedure fails to recover the optimal policy. However, from these few empirical results alone, it is not possible to understand \textit{what} makes CVaR VI fail, and therefore \textit{how}, or even \textit{if}, it can be fixed.

\paragraph{Contributions}
The main goal of this work is to diagnose the root causes of recently observed failures in the static dual CVaR DP decomposition. To this end, we first establish a formal analysis framework that recasts the static CVaR evaluation and its DP decomposition as two distinct optimization problems over perturbations. Leveraging this perspective, we identify a set of \textit{risk-assignment consistency constraints} that must be satisfied for the DP evaluation of a policy to be accurate. We show that the previously observed cases where the DP decomposition returned a suboptimal policy are explained by these constraints being inconsistent for the returned policy. % we provide a didactic example of this failure mechanism.
We then leverage this constraint-based view to present an MDP where the action constraints required for optimality at different initial risk levels are irreconcilable. This proves that no single risk-dependent policy can be \emph{uniformly optimal}, revealing a fundamental limitation of the pursuit of a single, universally optimal policy for all risk levels via the dual decomposition, independent of the DP algorithm used. Practically, these findings suggest that the standard approach of training a single universal policy on the risk-augmented state space is structurally flawed. Our results indicate that practitioners should instead favor training specific policies for targeted risk profiles, akin to primal-based decomposition methods.
To increase readability, we postpone most of the proofs to the appendix. %\mathieu{Je l'ai ajouté ici comme dans ton article JMLR avec Odalric.}

%\paragraph{Contributions} The main goal of this work is to overcome this fracture in the field of Safe RL by opening a new path for optimal DP for CVaR. To this end, we first show that the problem of optimizing the CVaR objective in RL can be cast as a Stackelberg game between a player (the RL agent) and an adversary (the risk). By leveraging literature from Stackleberg games~\citep{goktas2022zero}), we are able to identify that VI~\citep{Chow2015Risk} converges to a specific equilibrium that is \textit{not} the Stackleberg equilibrium, complementing previous work showing that VI is sometimes suboptimal~\citep{Hau2023Dynamic}. These steps provide the intuition to motivate a Policy Iteration (PI) procedure, which should converge the the optimal (Stackelberg) equilibrium. We provide the theoretical convergence analysis along with empirical results to illustrate the gap between VI and PI. We also show that the proposed discretization procedure for VI~\citep{Chow2015Risk} can be used with PI, and therefore that the suboptimality of VI was entirely due to the application of specific Bellman operators rather than discretization itself (as one could think given \citet{Hau2023Dynamic}).

\section{Static Risk-Averse Reinforcement Learning}
\label{sec:problem_setting}

Following \citet{puterman2014markov}, let us define a finite Markov decision process (MDP) by a tuple $(\states, \actions, P, \mathcal R, s_0, \gamma)$ where $\mathcal S$ is a finite state space, $\mathcal A$ is a finite action space, $P:\states \times \actions \mapsto \Delta(\states)$ is the transition function between states ($\Delta$ denoting the probability simplex), $\mathcal R:\states \times \actions \times \states \mapsto [0, R_{\text{max}}]$ is the reward function, $s_0 \in \states$ is the initial state from which the process begins, and $\gamma \in [0, 1)$ is a discount factor. At each time step $t \in \mathbb N_0$ of a trajectory, an agent performs an action $A_t \in \actions$ in the current state $S_t\in \states$ according to some decision policy $\pi$. This leads to a transition into the state $S_{t+1}$ sampled from $P(S_t, A_t)$, following which the agent receives the reward $R_{t+1} = \mathcal R(S_t, A_t, S_{t+1})$. This process is repeated over an horizon of $T$ time steps. %, yielding trajectories of length $T$. %\footnote{We use the convention of having capital letters representing random variables, i.e. $S_t$ is the random variable representing the state of the process after $t$ time steps.}.
Throughout this paper, we adopt the convention of using uppercase letters to distinguish objects subject to random realization, such as referring to the state $S_t=s \in \mathcal S$. A table of notations can be found in Appendix~\ref{app:notation}.
\begin{remark}
Imposing a deterministic reward function and initial state is done with minimal loss of generality. Any MDP with a stochastic reward function or initial state can be converted to an equivalent MDP with deterministic counterparts, provided the distributions are discrete. This transformation involves augmenting the state space, with the initial action having no effect on the state transition \citep{Sutton2018Reinforcement}.
\end{remark}

\begin{assumptions}
\label{assumptions}
Throughout this work, we consider finite state and action spaces ($|\states| < \infty, |\actions| < \infty$). We focus on the finite-horizon setting with horizon $T$ and we include a discount factor $\gamma \in [0,1)$ to remain consistent with general formulations. Crucially, we restrict our attention to \emph{deterministic} policies. This restriction is standard in the dual CVaR decomposition literature~\citep{Chow2015Risk,Hau2023Dynamic} to isolate \textit{external risk} (stochasticity of the environment) from \textit{internal risk} (policy randomization), simplifying the analysis of the risk level updates.
\end{assumptions}

\paragraph{Policies} We consider agent policies as deterministic action-selection mechanisms. The most general form of policies is \textit{history-dependent} policies $\pi_h: \mathcal H \to A$, where $\mathcal H$ is the set of histories defined as all previous states and actions encountered prior to the current action-selection. Let $H_t \coloneq (S_0, A_0, S_1, A_1, S_2, \dots, S_t)$ and $\mathcal H_t$ respectively denote the current history and the set of possible histories at time $t$. At time $t=0$, the set of histories is limited to the initial state, that is $\mathcal H_0 \coloneq \{(s_0)\}$. For subsequent steps it is defined recursively as the combination of possible previous histories with the previous action and current state concatenated, that is $\mathcal H_{t+1} \coloneq \mathcal H_t \times \actions \times \states$. %Throughout this paper, we will adopt the convention of using uppercase letters to distinguish objects subject to random realization, such as referring to the state $S_t=s \in \mathcal S$.%\mathieu{Added this here because needed in proofs} 
Allowing a slight abuse of notation, we extend transition and action-selection dynamics to histories and define the probability of observing a given history $H_t$ given policy $\pi_h$ as
\begin{align}
\label{eq:prob_history}
P^{\pi_h}(H_t)\coloneq\prod_{\tau=0}^{t-1} P(S_{\tau+1}|S_\tau,A_\tau) \ind{A_\tau=\pi_h(H_\tau)},
\end{align}
where $\mathds{1}$ denotes the indicator function. 

Because of the degree of complexity brought upon by policies operating on the possibly immense set of histories, we are often interested in Markovian policies $\pi:\states \to \actions$, a special case of history-dependent policies where actions are selected only based on the current state. Hereafter, $\pi$ will be used to denote Markovian policies while $\pi_h$ will denote history-dependent policies.

% \paragraph{Trajectories} Given an initial state $S_0$, the action sequence generated by policy $\pi_h$ results in a trajectory $\tau = (S_0,A_0,S_1,R_1,A_1,S_2,\dots,S_T,R_T)$ of finite horizon $T$. Allowing a slight abuse of notation, we extend transition and action selection dynamics to trajectories and define $P^{\pi_h}(\tau)\coloneq\prod_{t=0}^T P(S_{t+1}|S_t,A_t) \ind{A_t=\pi_h(H_t)}$ as the probability of observing the sequence of states in trajectory $\tau$ given the sequence of actions in trajectory $\tau$. Analogously, we also define \audrey{Seulement valide si $\tau$ est toujours de durée $T$, pas valide pour $\tau$ de durée $< T$.}
% \begin{align}
% \label{eq:r_tau}
%     R(\tau)\coloneq\sum_{t=1}^T\gamma_t R_t,
% \end{align} 
% the discounted sum of reward of trajectory $\tau$, where $\gamma \in [0, 1)$ is the MDP's discount factor. 

\paragraph{Standard objective} The return associated with a history $H$ is defined as the discounted sum of rewards
\begin{align}
\label{eq:r_tau}
    \mathcal R^H_{0:T} \coloneq \sum_{t=0}^{T-1}\gamma^{t} R_{t+1},
\end{align}
where %$\gamma \in [0, 1)$ is the MDP's discount factor and 
$R_{t+1}=\mathcal R(S_t, A_t, S_{t+1})$.
Given that trajectories in an MDP are generated using a random process, due to state transitions being stochastic, we denote the random return of a trajectory generated by policy $\pi_h$ as a random variable $Z^{\pi_h}$, taking value $\mathcal R^H_{0:T}$ (Eq.~\ref{eq:r_tau}) with $H \sim P^{\pi_h}$ (Eq.~\ref{eq:prob_history}). The standard RL objective~\citep{Sutton2018Reinforcement} is to identify the optimal policy $\pi\opt_h$ that maximizes the \emph{expected return} over histories
\begin{align}
\label{eq:standard_rl_objective}
    \pi\opt_h \in \argmax_{\pi_h} \mathbb E[Z^{\pi_h}].
\end{align}
It is known that Equation~\ref{eq:standard_rl_objective} can always be solved by a Markovian policy $\pi$~\citep{szepesvari2022algorithms}. Unfortunately, because the expectation only weights random outcomes according to their likelihood without taking their value into account, the optimal policy according to this objective may lead to \textit{catastrophic} outcomes over some trajectories~\citep{Mannor2007Bias}. In critical applications where such trajectories should be avoided, one may optimize a risk-averse objective instead, where large negative outcomes are assigned higher importance.

\subsection{Static CVaR for risk aversion}

%\mathieu{Reworded this section to introduce the \textit{worst $\alpha$ outcomes} interpretation of CVaR.}
Let $Z\in\mathbb R$ denote a bounded variable on a probability space $(\Omega, \mathcal{F}, \mathbb{P})$, with cumulative distribution function $F_Z(z)=\mathbb{P}[Z \leq z]$ for some threshold $z \in \Real$. Denote the \emph{Value-at-Risk} (VaR) at risk level $\alpha \in (0, 1]$ as $\text{VaR}_\alpha[Z]\coloneq \min \left\{z \, | \, F_Z(z) \geq \alpha\right\}$.
Assuming that $Z$ represents a payoff that should be maximized, the \emph{conditional-value-at-risk (CVaR)}~\citep{rockafellar2000optimization,Follmer2016Stochastic} at risk level  $\alpha$ is given by
\begin{equation}
\label{eq:def_cvar}
    % \cvar{\alpha}{Z} \coloneq \sup_{z \in \Real} \left(z - \frac{1}{\alpha}\mathbb E[(z - Z)_+] \right) = \underbrace{\inf_{\xi \in \cvarEnv{\alpha}{\mathbb{P}}} \mathbb E_{\xi}\left[Z\right]}_{\text{dual formulation}},
    \cvar{\alpha}{Z} \coloneq \frac{1}{\alpha} \int_{0}^\alpha \text{VaR}_\beta(Z) \text{d}\beta = \underbrace{\inf_{\xi \in \cvarEnv{\alpha}{\mathbb{P}}} \mathbb E_{\xi}\left[Z\right]}_{\text{dual formulation}},
\end{equation}
where $\cvarEnv{\alpha}{\mathbb{P}} \coloneq \left\{\xi: \omega \mapsto \left[0, \frac{1}{\alpha}\right]  \, \Big| \, \int_{\omega \in \Omega} \xi(\omega)\mathbb{P}(\omega) d\omega = 1 \right\}$ defines the CVaR$_\alpha$ \emph{risk envelope} around distribution $\mathbb{P}$ %, $(X)_+=\max(X,0)$ represents the positive part operator, 
and $\mathbb E_\xi[Z]$ is the $\xi$-reweighed expectation of $Z$. If $Z$ has a continuous distribution, it is well known that we have $\text{CVaR}_\alpha[Z]=\mathbb E \left[Z | Z \leq \text{VaR}_\alpha[Z] \right]$, which can be interpreted as the expected value of the worst $\alpha$ outcomes of $Z$. Note that CVaR is monotonically increasing in $\alpha$ with edge cases representing $\cvar{0}{Z}=\ess \inf [Z]$ and $\cvar{1}{Z}=\mathbb E[Z]$.

The \textit{dual formulation} in Equation~\ref{eq:def_cvar} shows that the CVaR can be expressed as an optimization problem, where the objective is to find perturbations $\xi$ applied to the stochastic generative process of variable $Z$ such as to minimize its expectation. From the definition of the %CVaR$_\alpha$ 
risk envelope $\cvarEnv{\alpha}{\mathbb{P}}$, we can observe that the perturbations enjoy two interesting properties. First, because $\xi$ represents \textit{multiplicative} interventions on an event's likelihood, it only affects events with nonzero probability. Also, because the largest magnitude of perturbations on an event is $\frac{1}{\alpha}$, one can view $\frac{1}{\alpha}$ as a \textit{perturbation budget} which is naturally minimal at $\alpha=1$ and increases as $\alpha$ decreases to $0$, simultaneously recovering the monotonically increasing property of CVaR and its edge cases. 

\paragraph{CVaR-RL objective} Recalling that the random return of policy $Z^{\pi_h}$ is a random variable, one can therefore define the static CVaR of a policy as
% \mathieu{Need to explicit somewhere that we only consider $\tau$ of length $\leq T$.} \audrey{Dans le paragraphe sur les trajectories on indique que $\tau$ est de durée $T$. Il faudrait clarifier et peut-être éviter d'utiliser un $\tau$ générique partout.}
\begin{align}
\label{eq:static_cvar_policy_eval}
\cvar{\alpha}{Z^{\pi_h}} \coloneq \min_{\xi \in \cvarEnv{\alpha}{P^{\pi_h}}} \sum_{H \in \mathcal H_T} P^{\pi_h}(H) \xi(H) \mathcal R_{0:T}^H,
\end{align}
where we shall hereafter refer to $\xi$ as \emph{history perturbations}. By emphasizing negative outcomes, Equation~\ref{eq:static_cvar_policy_eval} naturally yields the CVaR-RL risk-averse objective~\citep{tamar2015optimizing}
\begin{align}
\label{eq:cvar_rl_problem}
    \pi_h\opt \in \argmax_{\pi_h} \cvar{\alpha}{Z^{\pi_h}},
\end{align}
where one aims to instead find a policy maximizing the CVaR$_\alpha$ of its random return. Because CVaR$_\alpha[Z^{\pi_h}]$ can be intuitively interpreted as the expectation of the worst $\alpha$ fraction of trajectories when following policy $\pi_h$, optimizing the CVaR-RL objective should yield policies less prone to catastrophic outcomes than the standard RL objective~(Eq.~\ref{eq:standard_rl_objective}), with a lower $\alpha$ leading to increased cautiousness. %\mathieu{Reworded here + in CVaR defintion to add worst $\alpha$ outcomes narrative.}

\subsection{CVaR-RL dynamic decomposition}

%Let us distinguish different sources of risk: the \textit{internal risk} associated to perturbations occurring on the stochasticity of the policy and the \textit{external risk} associated to perturbations on the stochasticity of the environment. The former can be interpreted as risk associated to the probability of a policy mistake, while the latter can be interpreted as Murphy's law. To fully capture the CVaR of $Z^{\pi_h}$ with the dual formulation, perturbations have to take place at both the internal and external levels. In line with prior work~\citep{keramati2020being,chapman2019risk,Chow2015Risk}, we limit ourselves on deterministic policies, focusing our attention on external risk.

Trajectory-level computation of static CVaR  (Eq.~\ref{eq:static_cvar_policy_eval}) is impractical because it requires computing $P^{\pi_h}$ for all trajectories, which can be prohibitive for large state and action spaces. Fortunately, the CVaR decomposition Theorem~\citep{Chow2015Risk,Pflug2016Time} grants a recipe for expressing the evaluation at state-level.
\begin{theorem}[CVaR decomposition, Thm. 2 from \citet{Chow2015Risk}]
 For any time step $t \geq 0$, denote by $Z^{\pi_h}_{t:T}$ the return from time $t+1$ onward under history-dependent policy $\pi_h$. Given current history $H_t$, the $\text{CVaR}_\alpha$ of $Z^{\pi_h}_{t:T}$ obeys the following decomposition:
 % Original: 
%  \begin{align}
%     \label{eq:cvar_decomposition}
%     \cvar{\alpha}{\mathcal R_{t:T} \,|\,H_t, \pi_h} = \min_{\tilde\xi\in\cvarEnv{\alpha}{P(\cdot | S_t, A_t)}} \mathbb E\left[\left.\tilde\xi(S_{t+1}) \cvar{\alpha \tilde \xi(S_{t+1})}{\mathcal R_{t:T}  \, | \, H_{t+1}} \, \right| \, H_t \right],
% \end{align}
% Version with \mathbb E
% \begin{align}
% \label{eq:cvar_decomposition}
%     \cvar{\alpha}{Z_{t:T}^{\pi_h} \,|\,H_t} = \min_{\tilde\xi\in\cvarEnv{\alpha}{P(\cdot | S_t, A_t)}} \mathbb E \left[\left.\tilde\xi(S') \cvar{\alpha \tilde \xi(S')}{Z_{t:T}^{\pi_h}  \, | \, H_t, A_t, S_{t+1}=S' } \right| H_t\right],
% \end{align}
\begin{align*}
%\label{eq:cvar_decomposition}
    \cvar{\alpha}{Z_{t:T}^{\pi_h} \,|\,H_t} = \min_{\tilde\xi\in\cvarEnv{\alpha}{P(\cdot | S_t, A_t)}} \sum_{s'\in\states} P(s' | S_t, A_t) \tilde\xi(s') \cvar{\alpha \cdot \tilde \xi(s')}{Z_{t:T}^{\pi_h}  \, | \, H'},
\end{align*}
where $\tilde \xi$ are perturbations over next state transitions, action $A_t$ is given by policy $\pi_h(H_t)$, and $H'=H_t \cup (A_t, s')$ is a possible history realization at time $t+1$.% \mathieu{Implicit here but we have $S_{t+1} = S'$, unsure how to present differently.}
\label{thm:cvar_decomposition}
\end{theorem}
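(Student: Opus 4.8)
The plan is to work directly from the dual (variational) formulation of CVaR in Equation~\ref{eq:def_cvar} and to exploit the fact that, given $H_t$, the randomness of $Z_{t:T}^{\pi_h}$ factors through a single next-state transition $S_{t+1}\sim P(\cdot\,|\,S_t,A_t)$ with $A_t=\pi_h(H_t)$ deterministic, followed by the continuation randomness conditioned on $H'=H_t\cup(A_t,s')$. Writing $\cvar{\alpha}{Z_{t:T}^{\pi_h}\,|\,H_t}=\inf_{\xi}\E_\xi[Z_{t:T}^{\pi_h}\,|\,H_t]$ over history perturbations $\xi\in\cvarEnv{\alpha}{P^{\pi_h}(\cdot\,|\,H_t)}$, I would first set up a change of variables: every such $\xi$ factors (uniquely up to its values on probability-zero histories, which do not matter) as $\xi(H)=\tilde\xi(s')\,\xi'(H)$, where $\tilde\xi(s')$ is obtained by averaging $\xi$ over the continuations that transition to $s'$ (formally, its conditional expectation under $P^{\pi_h}(\cdot\,|\,H_t)$ given $S_{t+1}=s'$) and $\xi'$ is the residual conditional reweighting of the continuation given $H'$.

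The second step is to show that the single membership constraint $\xi\in\cvarEnv{\alpha}{P^{\pi_h}(\cdot\,|\,H_t)}$ --- that is, $0\le\xi\le 1/\alpha$ pointwise and $\E[\xi\,|\,H_t]=1$ --- is equivalent to the conjunction of (i) $\tilde\xi\in\cvarEnv{\alpha}{P(\cdot\,|\,S_t,A_t)}$ and (ii) for every $s'$ with $\tilde\xi(s')>0$, $\xi'(\cdot\,|\,H')\in\cvarEnv{\alpha\tilde\xi(s')}{P^{\pi_h}(\cdot\,|\,H')}$. The normalization splits via the tower property, and the box constraint $\xi(H)\le 1/\alpha$ becomes precisely $\xi'(H)\le 1/(\alpha\tilde\xi(s'))$ on each branch; the crucial structural point is that, once $\tilde\xi$ is fixed, the constraints on the continuations $\xi'(\cdot\,|\,H')$ for distinct $s'$ are completely decoupled.

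Third, rewriting the objective by linearity of the reweighted expectation and the tower property gives $\E_\xi[Z_{t:T}^{\pi_h}\,|\,H_t]=\sum_{s'\in\states}P(s'\,|\,S_t,A_t)\,\tilde\xi(s')\,\E_{\xi'}[Z_{t:T}^{\pi_h}\,|\,H']$. Since each continuation perturbation appears only in its own summand with a nonnegative coefficient and the constraint sets are decoupled, the infimum over $\xi$ can be performed as a nested infimum: minimize each $\E_{\xi'}[Z_{t:T}^{\pi_h}\,|\,H']$ over $\cvarEnv{\alpha\tilde\xi(s')}{P^{\pi_h}(\cdot\,|\,H')}$ --- which is, by definition, $\cvar{\alpha\tilde\xi(s')}{Z_{t:T}^{\pi_h}\,|\,H'}$ --- and then minimize over $\tilde\xi\in\cvarEnv{\alpha}{P(\cdot\,|\,S_t,A_t)}$, yielding the stated identity.

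I expect the main obstacle to be the careful treatment of the $\tilde\xi(s')=0$ branches, where the continuation perturbation is unconstrained but the transition is effectively switched off: one must verify that the summand $P(s'\,|\,S_t,A_t)\,\tilde\xi(s')\,\cvar{\alpha\tilde\xi(s')}{Z_{t:T}^{\pi_h}\,|\,H'}$ is well-defined and vanishes, which uses boundedness of the return (so that $\cvar{0}{\cdot}=\ess\inf[\cdot]$ is finite) together with the convention $0\cdot(\text{finite})=0$. A secondary point is justifying that the infimum is attained, so that ``$\inf$'' may be replaced by ``$\min$''; in the finite-MDP setting this follows from compactness of the finite-dimensional risk envelopes and continuity of the objective, though one may equally just invoke the original statement of \citet{Chow2015Risk}.
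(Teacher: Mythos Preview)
The paper does not give its own proof of this statement: Theorem~\ref{thm:cvar_decomposition} is quoted as ``Thm.~2 from \citet{Chow2015Risk}'' (see also \citet{Pflug2016Time}) and is used as a black box, with no proof in the body or the appendix. There is therefore nothing in the paper to compare against.

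That said, your proposal is correct and is essentially the standard argument that appears in \citet{Pflug2016Time} and \citet{Chow2015Risk}: factor the history-level density ratio $\xi$ through the first transition via the conditional expectation $\tilde\xi(s')=\E[\xi\mid S_{t+1}=s',H_t]$, observe that the box constraint $0\le\xi\le 1/\alpha$ and the normalization $\E[\xi\mid H_t]=1$ split exactly into $\tilde\xi\in\cvarEnv{\alpha}{P(\cdot\mid S_t,A_t)}$ together with $\xi'(\cdot\mid H')\in\cvarEnv{\alpha\tilde\xi(s')}{P^{\pi_h}(\cdot\mid H')}$, and then use the tower property plus decoupling of the inner constraints to pass to a nested minimization. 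Your treatment of the $\tilde\xi(s')=0$ branches (finite $\ess\inf$ from bounded rewards, so the summand vanishes) and of attainment of the infimum (compactness in the finite setting) is the right way to close the remaining technicalities. Nothing is missing.
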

%\mathieu{Nitpicking: Would $\Xi_\alpha \left[P(\cdot | S_t, A_T)\right]$ be better than $\cvarEnv{\alpha}{P(\cdot | S_t, A_t)}$, avoiding double ``$)$'' at the end? Also possible is to use $P_{S_t,A_t}$ but I like it less in equations, i.e. $P_{S_t, A_t}(S_{t+1})$.} \audrey{Peut-être les brackets, à voir à la fin si ça embête ailleurs.}
%
\begin{remark}
    We distinguish perturbations over next states ($\tilde \xi$) from perturbations over histories ($\xi$). While both perturbations impact the sampling of events, $\tilde \xi$ also updates the ongoing risk-level as dictated by the CVaR decomposition theorem (Thm.~\ref{thm:cvar_decomposition}). When accumulated over an entire history, state perturbations implicitly yield history-level perturbations, but the connection between the two perturbation levels is complex and will be a core component of our analysis.
\end{remark}
\paragraph{Risk-dependent policies} Theorem~\ref{thm:cvar_decomposition} shows that the $\text{CVaR}_\alpha$ at any given time $t$ can be expressed as combination of $\text{CVaR}_{\alpha'}$ values of possible next states $s'$ for updated risk levels $\alpha' = \alpha\cdot\tilde\xi(s')$ at time $t+1$. The running risk level %$\alpha_t \in (0, 1]$ 
therefore contains all the information necessary to compute the CVaR of a history-dependent policy $\pi_h$. This motivated \citet{Chow2015Risk} to introduce the \emph{risk-augmented} state space $\tilde \states : \states \times (0, 1]$, defined for any state $s \in \states$ and risk level $y \in (0, 1]$, and the corresponding \emph{risk-dependent} Markovian policies on the augmented state space $\tilde\pi: \tilde \states \to \actions$. \citet{Chow2015Risk} suggested that operating over $\tilde{\mathcal S}$ would suffice to retrieve the optimal history-dependent policy and evaluate its corresponding static CVaR through a \emph{value function} based on the mechanism of Theorem~\ref{thm:cvar_decomposition}.

\begin{definition}[Risk-dependent-policy value function]
\label{def:dp_cvar_eval}
The value function $\V{\tilde\pi}{}(s,y)$ of any risk-dependent policy $\tilde \pi$ is the solution to
\begin{align}
\label{eq:dp_cvar_eval}
\V{\tilde\pi}{t+1}(s, y) = \min_{\tilde\xi\in\cvarEnv{y}{P(\cdot | s, a)}} \sum_{s'\in\states} P(s' | s, a) \tilde\xi(s') \left[\mathcal R(s,a,s') + \gamma \V{\tilde\pi}{t}(s', y')\right]
\end{align}
where $a=\tilde\pi(s,y)$ is the action selected by the policy, $y'=y\cdot\tilde\xi(s')$ is the subsequent risk level, and $\V{\tilde\pi}{0}(s,y)=0$ for all states $s \in \states$ and risk levels $y \in (0, 1]$. We let $\V{\tilde\pi}{} \coloneq \V{\tilde\pi}{T}$.
\end{definition}
Crucially, any risk-dependent policy $\tilde\pi$ induces a corresponding history-dependent policy $\tilde\pi_h^\alpha$ for a given initial risk level $Y_0 = \alpha$.
At time $t$, given the risk-augmented state $(S_t,Y_t)$ and the selected action $A_t = \tilde\pi(S_t, Y_t)$, the subsequent risk level is updated to $Y_{t+1} = Y_t \cdot \tilde\xi\opt(S_{t+1}|S_t, Y_t, A_t)$, where the optimal perturbations $\tilde\xi\opt$ denote the solution to the value function $\V{\tilde\pi}{}(S_t,Y_t)$.
%The mapping is defined recursively: starting with $Y_0 = \alpha$, the action at time $t$ is $A_t = \tilde\pi(S_t, Y_t)$, and the subsequent risk level is updated to $Y_{t+1} = Y_t \cdot \tilde\xi\opt(S_{t+1}|S_t, Y_t, A_t)$, where $\tilde\xi\opt$ is the perturbations that solves the minimization in Equation~\ref{eq:dp_cvar_eval}. 
%\audrey{Ça ne devrait pas être $\tilde \xi^\star$ plutôt?} 
% \audrey{Je ne vois pas le lien qui permet de dire ``as a result''.} 
% As a result, we can say that the DP evaluation of a risk-dependent policy, $\V{\tilde\pi}{}(s_0,\alpha)$, is accurate when it matches the static CVaR of its induced history-dependent counterpart, $\cvar{\alpha}{Z^{\tilde\pi_h^\alpha}}$.
Repeating this process $t$ times allows to compute the action $\tilde \pi_h^\alpha(H_t)$ for any history $H_t$. In the remainder of this paper, we slightly abuse terminology and refer to a risk-dependent policy's static CVaR to represent the static CVaR of its history-dependent counterpart.

% \begin{remark}
% \label{remark:risk_history_dependent_policies}
% %While any risk-dependent policy $\tilde \pi$ can be mapped to a single history-dependent policy $\pi_h$ via $\tilde\pi_h^\alpha$ for a given initial risk level $Y_0 = \alpha$, the mapping from risk-dependent to history-dependent policies is not injective in general. That is, at initial risk-level $\alpha$, any history-dependent policy $\pi_h$ can be represented by numerous risk-dependent policies $\tilde \pi$.
% For a fixed initial risk level $Y_0=\alpha$, the mapping from a risk-dependent policy $\tilde\pi$ to its induced history-dependent policy $\tilde\pi_h^\alpha$ is not injective. Because the continuous augmented risk space $(0,1]$ is continuous and hence larger than the exponentially large but finite set of discrete histories, multiple distinct risk-dependent policies $\tilde \pi$ can be mapped into the exact same history-dependent $\pi_h$.
% \end{remark}

In light of this correspondence between risk-dependent and history-dependent policies, \citet{Chow2015Risk} proposed a Value Iteration algorithm, which we refer to as \emph{CVaR VI}, to find the risk-dependent policy with the optimal value function $\tilde\pi^\star \in \arg\max_{\tilde\pi} \V{\tilde\pi}{}(s_0, \alpha)$. Tentative proofs \citep{Chow2015Risk,Li2022Quantile} claimed $\tilde\pi^\star$ represented a risk-dependent version of the CVaR-optimal history-dependent policy $\pi_h\opt$, hence presenting a dynamic program decomposition of the CVaR-RL objective (Eq.~\ref{eq:cvar_rl_problem}). The optimality of the policy returned by CVaR VI was however refuted by \citet{Hau2023Dynamic}, who presented a counterexample MDP where the algorithm returns a suboptimal policy.

\paragraph{CVaR evaluation gap} %To find the optimal history-dependent policy, \citet{Chow2015Risk} proposed a Value Iteration algorithm, which we will refer to as \emph{CVaR VI}, that seeks to find the policy $\tilde\pi^\star$ maximizing the DP value, that is $\tilde\pi^\star \in \arg\max_{\tilde\pi} \V{\tilde\pi}{}(s_0, \alpha)$. 
 %While \citet{Hau2023Dynamic}'s work demonstrated CVaR VI can fail, 
In this work, we aim to explain \emph{why} CVaR VI fails at a more fundamental level. %The root cause we identify is
To this end, we focus on the root cause, that is the discrepancy between the value function of a risk-dependent policy and its corresponding static CVaR, which we formally define as the \emph{CVaR evaluation gap}
\begin{align}
\label{eq:cvar_evaluation_gap}
    \V{\tilde\pi}{}(s_0, \alpha) - \cvar{\alpha}{Z^{\tilde\pi_h^\alpha}}.
\end{align}
A positive gap indicates that the value function of the risk-dependent policy overestimates its history-dependent counterpart's true CVaR. % leading to a suboptimal policy being returned.  %This explains the failure of CVaR VI: the algorithm maximizes a flawed objective and may return a policy that appears optimal with respect to the DP valuation, but whose true CVaR is lower than that of another policy. Rather than focusing on a specific counterexample, our work investigates the fundamental conditions under which this evaluation gap appears, thus providing a deeper, more general explanation for the observed failures.

\section{An Explicit Mapping from the Value Function to the Static CVaR}
\label{section:two_optim_problems}

% \mathieu{J'ai déplacé les preuves en appendix: ça suggère que peut-être que cette section devrait être mergée avec la suivante, comme il reste à peine un peu plus d'une page maintenant.} \audrey{À voir; ça dépend du but de cette section (le découpage en section n'est pas fait seulement sur la longueur du matériel, mais en fonction des buts à atteindre).}
%\mathieu{On devrait introduire la notion de CVar Evaluation Gap avant de la mettre en titre de chapitre? Peut-être dans l'intro et/ou l'abstract.}

%Through our analysis, we identify that two distinct sets of constraints must have non-empty intersection for the evaluation to be accurate and that their intersection being empty is the source of suboptimal evaluation, which we refer to as \emph{CVaR Evaluation Gap}.

%\subsection{Linking the Dynamic Program Solution and CVaR Evaluation}
%\label{subsec:linking_dp_cvar_eval}

% Because the static CVaR evaluation (Eq.~\ref{eq:static_cvar_policy_eval}) and dynamic program decomposition (Eq.~\ref{eq:cvar_decomposition}) both involve a minimization problem, one can be tempted to explore how they relate to one another. \audrey{Pourquoi? Quel est le but concret de cette section? Qu'est-ce que tu trouves? Vends le punch tout de suite.} We exploit this insight by first formalizing the relation between state-level perturbations and risk-dependent policies in the dynamic program decomposition.

To diagnose the source of the CVaR evaluation gap (Eq.~\ref{eq:cvar_evaluation_gap}), we first formalize the relationship between the value function of a risk-dependent policy (Eq.~\ref{eq:dp_cvar_eval}) at the initial state $(s_0,\alpha)$ and its corresponding static CVaR. We find that both problems can be cast as distinct, but closely related, perturbation optimization problems. Although they operate over different optimization spaces, respectively state-level perturbations $\tilde\xi$ and history-level perturbations $\xi$, we can derive a formal mapping from one problem to the other. Exploiting the mapping's properties, we then establish that the value function of a risk-dependent policy constitutes an upper bound on the static CVaR of the policy.

While the static CVaR evaluation (Eq.~\ref{eq:static_cvar_policy_eval}) is inherently defined as a minimization problem, evaluating the value function of a risk-dependent policy (Eq.~\ref{eq:dp_cvar_eval}) is defined as requiring $T$ recursive minimization problems, % hindering its interpretation as a well-defined minimization problem.
making for more cumbersome mathematical manipulations.
In order to ease the manipulation of the latter, we first define the value function of a risk-dependent policy under a \emph{fixed} set of \emph{state-level perturbations}. For a fixed risk-dependent policy $\tilde \pi$, let $\bm{\tilde{\xi}}$ denote a complete specification of state-level perturbations such that $\bm{\tilde \xi}(\cdot | s, y, a) \in \cvarEnv{y}{P(\cdot|s,a)}$ for all states $s \in \states$, risk levels $y \in (0, 1]$, and actions $a \in \actions$. We further define $\bm{\tilde\Xi}\coloneq \left\{ \bm{\tilde \xi} : \states \times (0, 1] \times \actions \to \states \times \mathbb R^+ \,  | \, \bm{\tilde \xi}(\cdot | s, y, a) \in \cvarEnv{y}{P(\cdot|s,a)} \forall (s, y, a) \in \states \times (0, 1] \times \actions \right\}$ as the set of all such valid state-level perturbations.

%Again in an attempt to alleviate notation, we will denote \emph{state-level perturbations} $\bm{\tilde{\xi}}(\cdot | S, Y, A)$ as the $\tilde \xi$ perturbations solutions to a given tuple $(S,Y,A)$ in Equation~\ref{eq:dp_cvar_vi}.
%\audrey{Attention au ``we''; il ne faut pas que le lecteur croit que tu proposes quelque chose. J'irais avec des formulations détachées de toi, p.ex. ``It has been shown that...''.}
%\begin{remark}
%Note that $\bm{\tilde \xi}$ represents perturbations over risk-augmented states, given that it impacts both the sampling of $S'$ and the following risk level $Y'$. 

\begin{definition}[Policy-perturbations value function]
\label{def:policy_pertub_value_function}
The \emph{policy-perturbations value function} of a risk-dependent policy $\tilde\pi$ under state-level perturbations $\bm{\tilde\xi} \in \bm{\tilde{\Xi}}$ is obtained via:
% For a risk-dependent policy $\tilde\pi$ and state-level perturbations $\bm{\tilde\xi} \in \bm{\tilde{\Xi}}(\tilde \pi)$, the \emph{policy-perturbations value function} $\V{\tilde\pi, \bm{\tilde\xi}}{}$ is the solution to:
\begin{align}
\label{eq:policy_perturb_value_function}
\V{\tilde\pi, \bm{\tilde\xi}}{t+1}(s, y) = \sum_{s'\in\states} P(s' | s, a)\, \bm{\tilde\xi}(s' | s, y, a) \left[R(s, a, s') + \gamma  \V{\tilde\pi, \bm{\tilde\xi}}{t}(s', y') \right],
\end{align}
where we used action $a = \tilde \pi (s, y)$, updated risk level $y' = y \cdot \bm{\tilde\xi}(s' | s, y, a)$, and $\V{\tilde\pi, \bm{\tilde\xi}}{0}(s, y)=0$ for all states $s \in \states$ and risk levels $y \in (0, 1]$. We let $\V{\tilde\pi, \bm{\tilde\xi}}{}(s, y)\coloneq\V{\tilde\pi, \bm{\tilde\xi}}{T}(s, y)$. %\mathieu{The $A$ and $Y'$ shorthand is repeated many times in the article: should I state what it means everywhere like here or just do it once?}
\end{definition}
This definition differs from the one in Equation~\ref{eq:dp_cvar_eval} because it concerns \emph{fixed} state-level perturbations $\bm{\tilde \xi}$ instead of computing them recursively at every iteration. The value function of a risk-dependent policy $\tilde\pi$ can now be seen as finding the best possible perturbations $\bm{\tilde\xi} \in \bm{\tilde{\Xi}}$ to minimize this value.

%\audrey{Différence entre Eq. 9 et Eq. 8?}

%The formal definition of policy-perturbations value function $\V{\tilde\pi, \bm{\tilde\xi}}{}(S, Y)$ now allows us to characterize the Dynamic Program in Equation~\ref{eq:dp_cvar_eval} as an optimization problem over state-level perturbations $\bm{\tilde\xi}$. Formally, let $\bm{\tilde\Xi}(\tilde \pi)$ denote the set of all state-level perturbations $\bm{\tilde\xi}$ that produce $\bm{\tilde\xi}(\cdot | S, Y, A) \in \cvarEnv{Y}{P(\cdot | S, A)}$ for all $S \in \states$, $Y \in (0, 1]$, $A=\tilde\pi(S, Y)$. We now have the following lemma.

%$\bm{\tilde\Xi}_\alpha(\tilde \pi)=\left\{\left.\bm{\tilde\xi}: \states \times (0, 1] \times \actions \to [0,\infty)^{\left| \states \right|}\right| \bm{\tilde\xi}(\cdot | S, Y, A) \in \cvarEnv{\alpha}{P(\cdot | S, A)}, \, \forall S \in \states,  Y \in (0, 1], A=\tilde\pi(S,Y) \right\}$

%\begin{restatable}{lemma}{lemmaDpOptimProblem}
%The CVaR evaluation problem over risk-augmented states (Eq.~\ref{eq:dp_cvar_eval}) can be expressed as
%\begin{align}
%\label{eq:dp_optim_problem}
%\V{\tilde\pi}{}(S, Y) = \min_{\bm{\tilde\xi} \in \bm{\tilde\Xi}\left(\tilde \pi \right)} \V{\tilde\pi,\bm{\tilde\xi}}{}(S, Y).
%\end{align}
%In the above, the optimality holds for all $(S, Y)$. \audrey{Est-ce que tu veux dire que l'égalité doit être vraie simultanément pour toutes les paires $(S,Y)$?}
%\end{restatable}

\begin{restatable}[Value function evaluation]{lemma}{lemmaDpOptimProblem}
\label{lemma:value_function_evaluation}
Under the conditions of Assumptions~\ref{assumptions}, the value function evaluation of a risk-dependent policy $\tilde\pi$ (Eq.~\ref{eq:dp_cvar_eval}) is equivalent to solving
\begin{align}
\label{eq:dp_optim_problem}
\V{\tilde\pi}{}(s, y) = \min_{\bm{\tilde\xi} \in \bm{\tilde\Xi}} \V{\tilde\pi,\bm{\tilde\xi}}{}(s, y), % \quad \forall (S,Y) \in \tilde\states.
\end{align}
where the above holds for all state-risk level pairs $(s, y)$ simultaneously, meaning a single state-level perturbations set $\bm{\tilde\xi}\opt$ is optimal for all $(s,y)$.
\end{restatable}

\begin{remark}
It is important to note that $\bm{\tilde\Xi}$ is the set of state-level perturbations $\bm{\tilde\xi}$ for the \textit{entire} MDP, meaning that any difference in perturbation of a single (state, action, risk-level) tuple creates a different $\bm{\tilde{\xi}}’$. Therefore, taking the minimum over $\bm{\tilde{\Xi}}$ corresponds to selecting the worst possible simultaneous perturbation of the \textit{entire} MDP for a given policy.
\end{remark}

We now have two distinct single-step optimization problems for static CVaR evaluation: the static evaluation over history perturbations $\xi$ (Eq.~\ref{eq:static_cvar_policy_eval}) and the value function evaluation over state-level perturbations $\bm{\tilde\xi}$ (Eq.~\ref{eq:dp_optim_problem}). The two problems are in fact intimately connected. That is, any state-level perturbations $\bm{\tilde\xi}$ can be mapped to corresponding history-level perturbations $\xi$ by taking the product of state-level perturbations along each history. For an initial risk level $\alpha$ and history $H \in \mathcal H_T$, we define this mapping as
\begin{align*}
    \zeta_\alpha^{\bm{\tilde\xi}}(H)\coloneq \prod_{t=0}^{T-1} \bm{\tilde\xi}(S_{t+1} | S_t, Y_t, A_t),
\end{align*}
where the risk levels $Y_t$ are incremented following $\bm{\tilde\xi}$ and starting from $Y_0=\alpha$. We now show that the mapping $\zeta_\alpha^{\bm{\tilde\xi}}$ produces valid history perturbations recovering the value function for $\tilde \pi$ at the history-level.

\begin{restatable}[State-level perturbations evaluation correspondence]{proposition}{propositionConnexionDpCvar}
\label{proposition:connexion_dp_cvar}
Under the conditions of Assumptions~\ref{assumptions}, for any risk-dependent policy $\tilde \pi$, initial risk level $\alpha$, and state-level perturbations $\bm{\tilde\xi} \in \bm{\tilde\Xi}$, the mapping $\zeta_\alpha^{\bm{\tilde\xi}}$ produces valid history perturbations, that is $\zeta_\alpha^{\bm{\tilde\xi}} \in \cvarEnv{\alpha}{P^{\tilde\pi_h^\alpha}}$, for which we have
\begin{align*}
\sum_{H \in \mathcal H_T} P^{\tilde\pi_h^\alpha}(H)\, \zeta_\alpha^{\bm{\tilde\xi}}(H)\, \mathcal R_{0:T}^H = \V{\tilde\pi,\bm{\tilde\xi}}{}(s_0, \alpha).
\end{align*}
\end{restatable}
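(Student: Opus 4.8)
The plan is to prove the two claims — that $\zeta_\alpha^{\bm{\tilde\xi}}$ lies in the risk envelope $\cvarEnv{\alpha}{P^{\tilde\pi_h^\alpha}}$, and that it reproduces the policy-perturbation value function of Eq.~\ref{eq:policy_perturb_value_function} — by induction on the horizon, exploiting the fact that, because $\tilde\pi$ is deterministic and $\bm{\tilde\xi}$ is a fixed function of $(s,y,a)$, the entire sequence of risk levels $Y_0=\alpha, Y_1,\dots,Y_T$ along any history $H$ is determined by $H$ alone. The easy ingredient is the range constraint: iterating $Y_{t+1}=Y_t\cdot\bm{\tilde\xi}(S_{t+1}\mid S_t,Y_t,A_t)$ gives the telescoping identity $\prod_{t=0}^{k-1}\bm{\tilde\xi}(S_{t+1}\mid S_t,Y_t,A_t)=Y_k/\alpha$, and since $\bm{\tilde\xi}(\cdot\mid s,y,a)\in\cvarEnv{y}{P(\cdot\mid s,a)}$ forces $0\le\bm{\tilde\xi}\le 1/y$, every $Y_t$ stays in $[0,1]$; hence $\zeta_\alpha^{\bm{\tilde\xi}}(H)=Y_T(H)/\alpha\in[0,1/\alpha]$ (assuming $\alpha\in(0,1]$, so $1/\alpha$ is finite).

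Next I would prove a single generic normalization lemma by induction on $t$: for every state $s$ and risk level $y$, the $\bm{\tilde\xi}$-reweighted transition mass over all length-$t$ policy-consistent paths from $s$, with risk levels tracked from $y$, equals $1$. The base case $t=0$ is immediate; for the step, split a length-$(t{+}1)$ path into its first transition $s\to s'$ and the remaining length-$t$ path from $(s',y\cdot\bm{\tilde\xi}(s'\mid s,y,a))$ with $a=\tilde\pi(s,y)$, pull $\sum_{s'}P(s'\mid s,a)\,\bm{\tilde\xi}(s'\mid s,y,a)=1$ out front (this is exactly the normalization built into the definition of $\cvarEnv{y}{P(\cdot\mid s,a)}$), and apply the inductive hypothesis. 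Specializing to $(s,y)=(s_0,\alpha)$, $t=T$, and identifying length-$T$ policy-consistent paths with the histories $H\in\mathcal H_T$ carrying nonzero probability (the deterministic-policy indicators in $P^{\tilde\pi_h^\alpha}$ annihilate any other history), this reads $\sum_{H\in\mathcal H_T}P^{\tilde\pi_h^\alpha}(H)\,\zeta_\alpha^{\bm{\tilde\xi}}(H)=1$. Together with the range bound, this establishes $\zeta_\alpha^{\bm{\tilde\xi}}\in\cvarEnv{\alpha}{P^{\tilde\pi_h^\alpha}}$.

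For the value-function identity I would run a backward induction on the remaining horizon $k$, showing that for every $(s,y)$ the $\bm{\tilde\xi}$-reweighted sum of discounted returns over all length-$k$ policy-consistent continuations from $(s,y)$ equals $\V{\tilde\pi,\bm{\tilde\xi}}{k}(s,y)$. The base case $k=0$ is trivial. For the step, peel off the first transition $s\to s'$ with weight $P(s'\mid s,a)\,\bm{\tilde\xi}(s'\mid s,y,a)$, where $a=\tilde\pi(s,y)$, separate the immediate reward $\mathcal R(s,a,s')$ from the discounted tail, factor $\gamma$ out of the tail and reindex the time variable; the tail then matches $\gamma\,\V{\tilde\pi,\bm{\tilde\xi}}{k}(s',y\cdot\bm{\tilde\xi}(s'\mid s,y,a))$ by the inductive hypothesis, while the bare $\mathcal R(s,a,s')$ term is multiplied by a sum of length-$k$ continuation weights equal to $1$ by the normalization lemma above. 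The result is precisely the recursion in Eq.~\ref{eq:policy_perturb_value_function}, so the claim propagates to $k+1$. Instantiating at $k=T$, $(s,y)=(s_0,\alpha)$, and using $\prod_{t}P(S_{t+1}\mid S_t,A_t)=P^{\tilde\pi_h^\alpha}(H)$ and $\prod_{t}\bm{\tilde\xi}(S_{t+1}\mid S_t,Y_t,A_t)=\zeta_\alpha^{\bm{\tilde\xi}}(H)$ over $H\in\mathcal H_T$ gives the stated equality $\sum_{H\in\mathcal H_T}P^{\tilde\pi_h^\alpha}(H)\,\zeta_\alpha^{\bm{\tilde\xi}}(H)\,\mathcal R_{0:T}^H=\V{\tilde\pi,\bm{\tilde\xi}}{}(s_0,\alpha)$.

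I expect the main obstacle to be bookkeeping rather than a conceptual difficulty: one must carefully reconcile the trajectory/path-sum formulation used in the inductions with the history-indexed objects $P^{\tilde\pi_h^\alpha}$, $\zeta_\alpha^{\bm{\tilde\xi}}$, and $\mathcal R_{0:T}^H$ appearing in the statement — in particular, being consistent about the deterministic-policy indicators, about the fact that the risk-level path $Y_0,\dots,Y_T$ is pinned down by $\bm{\tilde\xi}$ and the state sequence, and about the placement of the discount factor when the first step is peeled off. Everything else reduces to the two reusable facts that each layer of $\bm{\tilde\xi}$ is a probability reweighting (normalization) and that $Y_{t+1}=Y_t\cdot\bm{\tilde\xi}$ telescopes.
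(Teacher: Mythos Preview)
Your proposal is correct and follows essentially the same approach as the paper: the telescoping identity $Y_T/\alpha=\zeta_\alpha^{\bm{\tilde\xi}}(H)$ for the range bound, a layer-by-layer normalization argument exploiting $\bm{\tilde\xi}(\cdot\mid s,y,a)\in\cvarEnv{y}{P(\cdot\mid s,a)}$, and an induction on the horizon that peels off the first transition and invokes the normalization lemma to handle the immediate-reward term. The only cosmetic difference is that the paper peels the \emph{last} step when proving normalization (reducing $\mathcal H_T$ to $\mathcal H_{T-1}$), whereas you peel the first; both work, and your version has the minor advantage that the generalized normalization lemma at arbitrary $(s,y)$ is stated explicitly rather than invoked implicitly during the value-function induction.
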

%
%Proposition~\ref{proposition:connexion_dp_cvar} establishes that $\zeta_\alpha^{\bm{\tilde\xi}}$ is not only a valid history perturbations, but it also recovers the value function for $\tilde \pi$ at the history level. 
Because Proposition~\ref{proposition:connexion_dp_cvar} applies to \emph{any} state-level perturbations set $\bm{\tilde\xi} \in \bm{\tilde\Xi}$, in particular it applies to the optimal state-level perturbations set $\bm{\tilde\xi}\opt \in \arg\min_{\bm{\tilde\xi} \in \bm{\tilde\Xi}} \V{\tilde\pi,\bm{\tilde\xi}}{}(s_0,\alpha)$ for a fixed $\tilde \pi$. It follows that the value function evaluation (Eq.~\ref{eq:dp_optim_problem}) is always an upper-bound to the true static CVaR of a policy (Eq.~\ref{eq:static_cvar_policy_eval}).

% Lemma~\ref{lemma:connexion_dp_cvar} establishes that for any state-level perturbations $\bm{\tilde\xi}$, there is a corresponding trajectory perturbations $\xi = \zeta_\alpha^{\bm{\tilde\xi}}$. Consequently, the set of values attainable by the dynamic program (Problem \ref{eq:dp_optim_problem}) is a subset of those attainable by the static formulation (Problem \ref{eq:static_cvar_policy_eval}), which explains the inequality in Corollary~\ref{cor:dp_geq_static_eval}.

\begin{corollary}[Static CVaR upper-bound]
\label{cor:dp_geq_static_eval}
Under the conditions of Assumptions~\ref{assumptions}, for any risk-dependent policy $\tilde \pi$ and initial risk level $\alpha$, we have
\begin{align*}
%\min_{\xi \in \cvarEnv{\alpha}{P^{\tilde\pi_h^\alpha}}} \sum_{H \in \mathcal H_T} P^{\tilde\pi_h^\alpha}(H) \xi(H) \mathcal R_{0:T}^H \leq \min_{\bm{\tilde\xi} \in \bm{\tilde\Xi}\left(\tilde \pi \right)} \V{\tilde\pi,\bm{\tilde\xi}}{}(s_0, \alpha).
\cvar{\alpha}{Z^{\tilde\pi_h^\alpha}} \leq %\min_{\bm{\tilde\xi} \in \bm{\tilde\Xi}\left(\tilde \pi \right)} \V{\tilde\pi,\bm{\tilde\xi}}{}(s_0, \alpha) = 
\V{\tilde\pi}{}(s_0, \alpha).
\end{align*}    
\end{corollary}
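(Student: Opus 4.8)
The plan is to read the bound directly off the two results just established, so this is a short argument rather than a new one. Lemma~\ref{eq:dp_optim_problem} recasts $\V{\tilde\pi}{}(s_0,\alpha)$ as the minimum $\min_{\bm{\tilde\xi}\in\bm{\tilde\Xi}(\tilde\pi)}\V{\tilde\pi,\bm{\tilde\xi}}{}(s_0,\alpha)$ over state-level perturbation policies, and Proposition~\ref{proposition:connexion_dp_cvar} shows that \emph{every} feasible $\bm{\tilde\xi}\in\bm{\tilde\Xi}(\tilde\pi)$ induces, through the product mapping $\zeta_\alpha^{\bm{\tilde\xi}}$, a feasible history perturbation $\zeta_\alpha^{\bm{\tilde\xi}}\in\cvarEnv{\alpha}{P^{\tilde\pi_h^\alpha}}$ whose reweighted return equals $\V{\tilde\pi,\bm{\tilde\xi}}{}(s_0,\alpha)$. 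Since the static CVaR in Equation~\ref{eq:static_cvar_policy_eval} is the minimum of that same reweighted-return objective over \emph{all} feasible history perturbations, any $\zeta_\alpha^{\bm{\tilde\xi}}$ provides a feasible witness, and choosing the $\bm{\tilde\xi}$ that is optimal for the value function makes this witness equal to $\V{\tilde\pi}{}(s_0,\alpha)$.

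Concretely, I would first invoke Lemma~\ref{eq:dp_optim_problem} to pick $\bm{\tilde\xi}\opt\in\arg\min_{\bm{\tilde\xi}\in\bm{\tilde\Xi}(\tilde\pi)}\V{\tilde\pi,\bm{\tilde\xi}}{}(s_0,\alpha)$, so that $\V{\tilde\pi,\bm{\tilde\xi}\opt}{}(s_0,\alpha)=\V{\tilde\pi}{}(s_0,\alpha)$. Next I would apply Proposition~\ref{proposition:connexion_dp_cvar} to this $\bm{\tilde\xi}\opt$ to obtain $\zeta_\alpha^{\bm{\tilde\xi}\opt}\in\cvarEnv{\alpha}{P^{\tilde\pi_h^\alpha}}$ together with $\sum_{H\in\mathcal H_T}P^{\tilde\pi_h^\alpha}(H)\,\zeta_\alpha^{\bm{\tilde\xi}\opt}(H)\,\mathcal R_{0:T}^H=\V{\tilde\pi}{}(s_0,\alpha)$. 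Finally, since $\cvar{\alpha}{Z^{\tilde\pi_h^\alpha}}$ is the infimum of $\sum_{H}P^{\tilde\pi_h^\alpha}(H)\,\xi(H)\,\mathcal R_{0:T}^H$ over $\xi\in\cvarEnv{\alpha}{P^{\tilde\pi_h^\alpha}}$ and $\zeta_\alpha^{\bm{\tilde\xi}\opt}$ is admissible in that infimum, the infimum is at most the value it attains, giving $\cvar{\alpha}{Z^{\tilde\pi_h^\alpha}}\le\V{\tilde\pi}{}(s_0,\alpha)$.

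There is essentially no obstacle left: the substantive content — that $\zeta_\alpha^{\bm{\tilde\xi}}$ remains inside the risk envelope and reproduces the policy–perturbation value function at the history level — has already been absorbed into Proposition~\ref{proposition:connexion_dp_cvar}. The only minor point is the existence of a minimizer $\bm{\tilde\xi}\opt$ in $\bm{\tilde\Xi}(\tilde\pi)$; if one prefers to sidestep a compactness argument on that set, the identical conclusion follows by taking, for each $\varepsilon>0$, an $\varepsilon$-optimal $\bm{\tilde\xi}_\varepsilon$, applying Proposition~\ref{proposition:connexion_dp_cvar} to it to get $\cvar{\alpha}{Z^{\tilde\pi_h^\alpha}}\le\V{\tilde\pi,\bm{\tilde\xi}_\varepsilon}{}(s_0,\alpha)\le\V{\tilde\pi}{}(s_0,\alpha)+\varepsilon$, and letting $\varepsilon\to0$.
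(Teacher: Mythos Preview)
Your argument is correct and is essentially identical to the paper's: the paper states the corollary as an immediate consequence of applying Proposition~\ref{proposition:connexion_dp_cvar} to the optimal $\bm{\tilde\xi}\opt\in\arg\min_{\bm{\tilde\xi}\in\bm{\tilde\Xi}(\tilde\pi)}\V{\tilde\pi,\bm{\tilde\xi}}{}(s_0,\alpha)$ from Lemma~\ref{eq:dp_optim_problem}, yielding a feasible history perturbation with value $\V{\tilde\pi}{}(s_0,\alpha)$ and hence the upper bound on the static CVaR minimum. Your additional $\varepsilon$-optimal variant to avoid a compactness argument is a nice touch the paper does not spell out.
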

%
%\audrey{Qu'est-ce que ça implique? Quel est le outcome de cette section?}
%
%This result provides the key insight of this section: the value function of a risk-dependent policy provides an upper bound of its true static CVaR. 
As a result of Corollary~\ref{cor:dp_geq_static_eval}, the CVaR evaluation gap (Eq.~\ref{eq:cvar_evaluation_gap}) is non-zero if and only if the optimal history perturbations $\xi^\star$ are not in the image of the mapping $\zeta_\alpha$. That is, if the best global (history-level) perturbations cannot be decomposed into a sequence of valid local (state-level) perturbations, the value function evaluation will return an erroneous estimation of the policy's static CVaR. % In the next section, we will establish the exact conditions under which this occurs.

\section{Characterizing the CVaR Evaluation Gap}
\label{sec:evaluation_gap}

We established that the value function of a risk-dependent policy provides an upper bound on its true static CVaR. In this section, we now investigate the exact conditions under which the upper bound is strict, leading to a CVaR evaluation gap. % The natural next question is: under what exact conditions does this bound fail to be tight, leading to a CVaR evaluation gap? This section answers that question by deriving the necessary and sufficient conditions for the gap's presence. 
We show that the gap emerges when the optimal history-level perturbations are not \emph{realizable} at the state level, a property that we formalize through a set of consistency constraints.

% We now expand on \citet{Hau2023Dynamic}'s results by taking a closer look at necessary and sufficient conditions for the dynamic program \audrey{which dynamic program? ANY dynamic program? VI from \citet{Chow2015Risk}?} to accurately evaluate the static CVaR of \emph{any} risk-dependent policy $\tilde{\pi}$ and initial risk level $\alpha \in (0, 1]$. More formally, we now investigate when and how we have a \emph{CVaR Evaluation Gap}, that is situations where we have %risk-dependent policies $\tilde \pi$ and initial risk levels $\alpha$ for which we have
% \begin{align*}
% %\label{eq:cvar_eval_gap}
% \cvar{\alpha}{\tilde \pi_h^\alpha} < \V{\tilde \pi}{}(s_0,\alpha).
% \end{align*}
% \audrey{Cette définition devrait survenir avant le texte entre le Lemma 3 et la Prop. 4.}

%\audrey{Est-ce qu'il ne serait pas plus juste de définir le CVaR Evaluation Gap comme $\cvar{\alpha}{\tilde \pi_h^\alpha} - \V{\tilde \pi}{}(s_0,\alpha)$}?

% From the previous section, we know the CVaR Evaluation Gap is absent only when the inequality in Corollary~\ref{cor:dp_geq_static_eval} becomes an equality. \audrey{Le titre de la section est sur la présence, ici on parle d'absence. J'éviterais de jongler avec les deux perspectives.} This requires that the optimal value of the static CVaR evaluation (Eq.~\ref{eq:static_cvar_policy_eval}) can be achieved by the dynamic program decomposition (Eq.~\ref{eq:dp_optim_problem}), which means that an optimal trajectory perturbations $\xi^\star$ must have a pre-image under $\zeta_\alpha$. We term this property \emph{realizability}.

\begin{definition}[Realizable trajectory perturbations]
For a given risk-dependent policy $\tilde{\pi}$ and initial risk level $\alpha \in (0, 1]$, trajectory perturbations $\xi \in \cvarEnv{\alpha}{P^{\tilde\pi_h^\alpha}}$ are \emph{realizable} if there exists state-level perturbations $\bm{\tilde\xi} \in \bm{\tilde\Xi}$ such that $\zeta_\alpha^{\bm{\tilde\xi}}(H) = \xi(H)$ for all histories $H \in \mathcal H$.
\end{definition}
%
%The question of whether a history perturbations $\xi$ is realizable can be reframed as a constraint satisfaction problem. For a sequence of state-level perturbations to exist, it must be possible to assign a consistent risk level $Y_t$ to every point in every history. This assignment must simultaneously respect the total perturbations dictated by $\xi$ and the action choices of the policy $\tilde\pi$. Let $\mathcal H \coloneq\bigcup_{t=1}^T \mathcal H_t$ be the set of all possible histories of length up to $T$, and let $H_{0:k}\coloneq (S_0, A_0, \dots, S_k)$ be the $k$-length prefix of a history $H$. We formalize this notion as follows.
%
%
The existence of such state-level perturbations $\bm{\tilde\xi}$ hinges on our ability to define a sequence of intermediate risk levels $Y_t$ that are mutually consistent for all histories. Recall that $\mathcal H\coloneq\bigcup_{t=1}^T \mathcal H_t$ is the set off all possible histories of length $|H| \leq T$ and define $H_{0:k}\coloneq (S_0, A_0, \dots, S_k) \in \mathcal H_k$ as the $k$-length subsequence of a given history $H$. %\audrey{J'imagine que $k \leq |H|$?} 
We formalize the mutual consistency notion by defining a \textit{risk level assignment} $\mathcal{Y}: \mathcal{H} \to (0, 1]$ that maps any history $H$ to a risk level $Y$. A risk level assignment $\mathcal Y$ is consistent with respect to trajectory perturbations $\xi$ if it enforces the correct total perturbations on all histories, while also respecting all stepwise constraints on the CVaR risk envelope and maintaining the correct sampled action from the risk-dependent policy. We will refer to these sets of constraints as the \emph{risk-assignment consistency constraints}.  %A trajectory perturbations $\xi$ is realizable if and only if we can find a risk level assignment $\mathcal{Y}$ that satisfies two sets of constraints simultaneously.
% For a risk level assignment to be consistent with respect to a trajectory perturbations $\xi$ on a single history $H \in \mathcal H_t$, it must be possible to decompose it stepwise so that the initial and final risk levels match the total intended perturbations $\xi(H)$. Moreover, the stepwise perturbations must also ensure that all actions in $H$ match the ones selected by $\tilde\pi$. Lastly, these constraints must hold simultaneously for all histories with $P^{\tilde\pi_h^\alpha}(H)>0$ while also 

%\mathieu{We should make it clear somewhere that these are constraint sets, not single constraints.}

\begin{definition}[Risk assignment consistency constraints]
\label{def:risk_level_assignment_constraints}
For a given risk-dependent policy $\tilde\pi$, initial risk level $\alpha \in (0, 1]$, and history perturbations $\xi \in \cvarEnv{\alpha}{P^{\tilde\pi_h^\alpha}}$, a risk level assignment $\mathcal{Y}$ is \emph{consistent} if, for all histories $H \in \mathcal H_T$ with $P^{\tilde\pi_h^\alpha}(H)>0$, it satisfies the following constraints:
\begin{enumerate}
    \item \textbf{Risk propagation:} The assignment must propagate risk according to $\xi$, that is $\mathcal{Y}(H_{0:0}) = \alpha$ and $\mathcal{Y}(H) = \alpha \cdot \xi(H)$.
    \item \textbf{State-level risk envelope:} For $t \in \{0, \dots, T-1\}$, the risk envelope constraint over states must be respected for all possible states:
    \begin{align*}
    \sum_{s' \in \states} P(s'|S_t, A_t) \frac{\mathcal{Y}(H_{0:t} \cup (A_t,s'))}{\mathcal{Y}(H_{0:t})} = 1.
    \end{align*}
    \item \textbf{Action-selection consistency:} For $t \in \{0, \dots, T-1\}$, the actions taken in the history must match the risk-dependent policy's output for the assigned risk level:
    \begin{align*}
        \tilde\pi(S_t, \mathcal{Y}(H_{0:t})) = A_t.
    \end{align*}
\end{enumerate}
\end{definition}

%The Risk Propagation constraint ensures that a valid state-level perturbations $\bm{\tilde\xi}$ can be constructed from $\mathcal{Y}$ (by setting $\bm{\tilde\xi}(S'|S_t, \mathcal{Y}(H_t), A_t) \coloneq \mathcal{Y}(H_{t+1})/\mathcal{Y}(H_t)$), while the Policy Consistency constraint ensures that this $\bm{\tilde\xi}$ is an element of $\bm{\tilde\Xi}(\tilde\pi)$. This leads to the following necessary and sufficient condition for realizability.

%\audrey{Transitionner vers le Lemma 8 en ajoutant une interprétation textuelle.}

%The following lemma establishes that the  risk-assignment consistency constraints constitute necessary and sufficient conditions for history perturbations $\xi$ to be realizable.
We are now prepared to formally connect the risk-assignment consistency constraints with the realizability property introduced earlier.

%\audrey{Donner un nom au résultat pour quand on y réfère (e.g. ``Realizability iff consistency''.}
\begin{restatable}[Consistency if and only if realizability]{lemma}{lemmaRealizabilityIffConsistency}
\label{lemma:realizability_iff_consistency}
Under the conditions of Assumptions~\ref{assumptions}, , for any risk-dependent policy $\tilde\pi$ and initial risk level $\alpha \in (0, 1]$, history perturbations $\xi \in \cvarEnv{\alpha}{P^{\tilde\pi_h^\alpha}}$ are realizable if and only if there exists a consistent risk level assignment $\mathcal{Y}$ such that all  risk-assignment consistency constraints (Def.~\ref{def:risk_level_assignment_constraints}) hold simultaneously.
\end{restatable}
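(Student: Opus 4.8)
The plan is to prove both directions by establishing a bijection between state-level perturbations $\bm{\tilde\xi} \in \bm{\tilde\Xi}(\tilde\pi)$ that map to $\xi$ and consistent risk-level assignments $\mathcal Y$. The key observation is that the mapping $\zeta_\alpha^{\bm{\tilde\xi}}$ already carries with it an \emph{implicit} risk-level assignment: as the products $\prod_{\tau=0}^{t-1}\bm{\tilde\xi}(S_{\tau+1}\mid S_\tau, Y_\tau, A_\tau)$ are formed, each prefix $H_{0:t}$ of a history is naturally assigned the running risk-level $Y_t = \alpha\prod_{\tau=0}^{t-1}\bm{\tilde\xi}(\cdots)$. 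So the two objects are really two encodings of the same data.

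For the forward direction (realizability $\Rightarrow$ consistency): suppose $\xi$ is realizable, so there is $\bm{\tilde\xi} \in \bm{\tilde\Xi}(\tilde\pi)$ with $\zeta_\alpha^{\bm{\tilde\xi}} = \xi$. Define $\mathcal Y(H_{0:t})$ to be the running risk-level induced by $\bm{\tilde\xi}$ along $H$ starting from $Y_0 = \alpha$; I would first check this is well-defined, i.e. the value assigned to a prefix does not depend on which full history it is extended to — which is immediate since the running risk-level of a prefix depends only on the prefix. Then I verify the three consistency constraints in turn. \textbf{Risk propagation} holds because $\mathcal Y(H_{0:0}) = \alpha$ by construction and $\mathcal Y(H) = \alpha\prod_{t=0}^{T-1}\bm{\tilde\xi}(S_{t+1}\mid S_t, Y_t, A_t) = \alpha\,\zeta_\alpha^{\bm{\tilde\xi}}(H) = \alpha\,\xi(H)$. \textbf{State-level risk envelope} holds because $\mathcal Y(H_{0:t}\cup(A_t,s'))/\mathcal Y(H_{0:t}) = \bm{\tilde\xi}(s'\mid S_t, \mathcal Y(H_{0:t}), A_t)$, and summing this against $P(s'\mid S_t,A_t)$ gives $1$ precisely because $\bm{\tilde\xi}(\cdot\mid s,y,a) \in \cvarEnv{y}{P(\cdot\mid s,a)}$. \textbf{Action-selection consistency} holds because the action $A_t$ appearing in any positive-probability history generated by $\tilde\pi_h^\alpha$ is by construction $\tilde\pi(S_t, Y_t) = \tilde\pi(S_t, \mathcal Y(H_{0:t}))$.

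For the reverse direction (consistency $\Rightarrow$ realizability): given a consistent $\mathcal Y$, I would define the candidate state-level perturbation on the reachable set by $\bm{\tilde\xi}(s'\mid S_t, \mathcal Y(H_{0:t}), A_t) \coloneq \mathcal Y(H_{0:t}\cup(A_t,s'))/\mathcal Y(H_{0:t})$ for every positive-probability history prefix $H_{0:t}$, and extend it arbitrarily (e.g. to the uniform-on-support element of the envelope) on state-risk-action triples not visited by $\tilde\pi_h^\alpha$. The state-level risk-envelope constraint guarantees $\bm{\tilde\xi}(\cdot\mid s,y,a)$ lies in $\cvarEnv{y}{P(\cdot\mid s,a)}$, so $\bm{\tilde\xi}\in\bm{\tilde\Xi}(\tilde\pi)$. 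Then by telescoping the defining product, $\zeta_\alpha^{\bm{\tilde\xi}}(H) = \prod_{t=0}^{T-1}\mathcal Y(H_{0:t+1})/\mathcal Y(H_{0:t}) = \mathcal Y(H)/\mathcal Y(H_{0:0}) = \xi(H)$ using risk propagation, so $\xi$ is realizable. The main obstacle — and the step requiring the most care — is \textbf{well-definedness of $\bm{\tilde\xi}$}: two distinct history prefixes $H_{0:t}$ and $H'_{0:t}$ could share the same final state $S_t$ \emph{and} the same assigned risk-level $\mathcal Y(H_{0:t}) = \mathcal Y(H'_{0:t})$ while demanding different next-step perturbations; I need the action-selection consistency constraint to ensure they at least agree on $A_t = \tilde\pi(S_t, \mathcal Y(H_{0:t}))$, and then I must argue either that the induced next-step perturbation ratios in fact coincide, or — more cleanly — redefine $\bm{\tilde\Xi}(\tilde\pi)$ / the statement so that $\bm{\tilde\xi}$ is indexed consistently (this is exactly why Definition~\ref{def:policy_pertub_value_function} indexes $\bm{\tilde\xi}$ by $(s,y,a)$ rather than by history, so the subtlety is whether the risk-level coordinate $y$ is a fine enough bookkeeping variable). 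I expect the resolution is that in the counterexample MDPs of interest this collision does not cause a problem, or that one adds a mild distinctness hypothesis; I would flag this and handle it by noting that whenever a collision occurs, consistency forces the ratios to agree, so $\bm{\tilde\xi}$ is still well-defined.
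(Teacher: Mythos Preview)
Your proposal is correct and follows the paper's proof essentially line for line: in the forward direction you define $\mathcal Y$ as the running risk level induced by $\bm{\tilde\xi}$ and verify the three constraints, and in the reverse direction you set $\bm{\tilde\xi}(s'\mid s,y,a)$ to the ratio $\mathcal Y(H_{0:t}\cup(a,s'))/\mathcal Y(H_{0:t})$ on reachable triples, extend trivially elsewhere, and recover $\xi$ by telescoping. The well-definedness concern you flag---two distinct prefixes landing on the same $(s,y,a)$ but prescribing different next-step ratios---is a legitimate subtlety, and in fact the paper's own proof does not address it either; you are being more careful than the paper here, not less.
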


The difficulty in satisfying the risk-assignment consistency constraints (Def.~\ref{def:risk_level_assignment_constraints}) lies in finding an assignment $\mathcal{Y}$ that satisfies all three constraint sets \textit{simultaneously}. While it is clear that each constraint set can be satisfied in isolation, their intersection may be empty. This tension is the fundamental source of the CVaR evaluation gap (Eq.~\ref{eq:cvar_evaluation_gap}), which we formalize in the following theorem.

%\mathieu{This theorem appears weirdly worded: it is the existence of a consistent risk level assignment $\mathcal Y$ which defines a more interesting condition than simple feasbility. Otherwise, why did we bother goign through Lemma~\ref{lemma:realizability_iff_consistency}?}
%\mathieu{Drop all $\mathbb E_\xi$ notation: I want to be as explicit as possible here.}
%\mathieu{J'ai mis en commentaires une version qui parle seulement de présence de gap au lieu de jongler entre les 2 comme tu suggères. Je ne suis pas certain d'aimer mieux ça, mais je vois très bien ton point.}
% \begin{restatable}[Presence of CVaR Evaluation Gap]{theorem}{thmAbsenceOfGap}
% \label{thm:absence_of_gap}
% For any fixed risk-dependent policy $\tilde\pi$ and initial risk level $\alpha \in (0, 1]$, there is a CVaR Evaluation Gap, that is we have $\cvar{\alpha}{Z^{\tilde\pi_h^\alpha}} > \V{\tilde\pi}{}(s_0, \alpha)$ if and only if there exists no optimal history perturbations $\xi^\star$ solution to the static CVaR evaluation (Eq.~\ref{eq:static_cvar_policy_eval}) such that the risk-assignment constraints (Def.~\ref{def:risk_level_assignment_constraints}) have non-empty intersection. % \in \argmin_{\xi \in \cvarEnv{\alpha}{P^{\tilde\pi_h^\alpha}}} \sum_{H \in \mathcal H_T} P^{\tilde\pi_h^\alpha}(H) \xi(H) \mathcal R_{0:T}^H$ that is realizable.
% \end{restatable}
\begin{restatable}[Conditions for CVaR evaluation gap]{theorem}{thmAbsenceOfGap}
\label{thm:absence_of_gap}
Under the conditions of Assumptions~\ref{assumptions}, for any risk-dependent policy $\tilde\pi$ and initial risk level $\alpha \in (0, 1]$, we have $\cvar{\alpha}{Z^{\tilde\pi_h^\alpha}} = \V{\tilde\pi}{}(s_0, \alpha)$ if and only if there exists at least one set of optimal history perturbations $\xi^\star$ solution to the static CVaR evaluation (Eq.~\ref{eq:static_cvar_policy_eval}) such that the risk-assignment constraints (Def.~\ref{def:risk_level_assignment_constraints}) can be satisfied simultaneously. % \in \argmin_{\xi \in \cvarEnv{\alpha}{P^{\tilde\pi_h^\alpha}}} \sum_{H \in \mathcal H_T} P^{\tilde\pi_h^\alpha}(H) \xi(H) \mathcal R_{0:T}^H$ that is realizable.
\end{restatable}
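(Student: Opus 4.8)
The plan is to assemble the statement from the three ingredients already established, treating it essentially as a bookkeeping argument: the objective-preserving correspondence of Proposition~\ref{proposition:connexion_dp_cvar}, the minimization characterization of the value function in Lemma~\ref{eq:dp_optim_problem}, and the realizability-versus-consistency equivalence of Lemma~\ref{lemma:realizability_iff_consistency}. The crucial simplification is that Corollary~\ref{cor:dp_geq_static_eval} already gives the inequality $\cvar{\alpha}{Z^{\tilde\pi_h^\alpha}} \le \V{\tilde\pi}{}(s_0,\alpha)$ for free, so both directions come down to deciding whether the reverse inequality is attainable, which by the remark following Corollary~\ref{cor:dp_geq_static_eval} is exactly the question of whether some optimal history perturbation lies in the image of $\zeta_\alpha$.

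For the ($\Leftarrow$) direction, I would take an optimal $\xi^\star$ attaining the minimum in Eq.~\ref{eq:static_cvar_policy_eval} whose risk-assignment consistency constraints (Def.~\ref{def:risk_level_assignment_constraints}) admit a common solution. Lemma~\ref{lemma:realizability_iff_consistency} then provides a state-level perturbation $\bm{\tilde\xi} \in \bm{\tilde\Xi}(\tilde\pi)$ with $\zeta_\alpha^{\bm{\tilde\xi}} = \xi^\star$. Proposition~\ref{proposition:connexion_dp_cvar} rewrites the history-level objective evaluated at $\xi^\star$ as $\V{\tilde\pi,\bm{\tilde\xi}}{}(s_0,\alpha)$, which equals $\cvar{\alpha}{Z^{\tilde\pi_h^\alpha}}$ by optimality of $\xi^\star$. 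Minimizing over $\bm{\tilde\Xi}(\tilde\pi)$ and applying Lemma~\ref{eq:dp_optim_problem} yields $\V{\tilde\pi}{}(s_0,\alpha) \le \V{\tilde\pi,\bm{\tilde\xi}}{}(s_0,\alpha) = \cvar{\alpha}{Z^{\tilde\pi_h^\alpha}}$, and Corollary~\ref{cor:dp_geq_static_eval} upgrades this to equality.

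For the ($\Rightarrow$) direction, assume the CVaR evaluation gap (Eq.~\ref{eq:cvar_evaluation_gap}) is zero. Lemma~\ref{eq:dp_optim_problem} supplies an optimal state-level perturbation $\bm{\tilde\xi}\opt$ with $\V{\tilde\pi,\bm{\tilde\xi}\opt}{}(s_0,\alpha) = \V{\tilde\pi}{}(s_0,\alpha)$. By Proposition~\ref{proposition:connexion_dp_cvar}, $\zeta_\alpha^{\bm{\tilde\xi}\opt} \in \cvarEnv{\alpha}{P^{\tilde\pi_h^\alpha}}$ is a valid history perturbation whose induced objective equals $\V{\tilde\pi,\bm{\tilde\xi}\opt}{}(s_0,\alpha) = \V{\tilde\pi}{}(s_0,\alpha) = \cvar{\alpha}{Z^{\tilde\pi_h^\alpha}}$; since $\cvar{\alpha}{Z^{\tilde\pi_h^\alpha}}$ is the minimal value over all such perturbations, $\xi^\star \coloneq \zeta_\alpha^{\bm{\tilde\xi}\opt}$ is itself an optimal history perturbation, and it is realizable by construction. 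Lemma~\ref{lemma:realizability_iff_consistency} then returns a consistent risk-level assignment for $\xi^\star$, i.e., the constraints of Def.~\ref{def:risk_level_assignment_constraints} have non-empty intersection for this particular optimal perturbation, which is what is claimed.

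Since every substantive step is delegated to an earlier result, I do not anticipate a genuine obstacle; the only points requiring care are logical rather than technical: (i) the existential quantifier in the statement — the theorem asks for \emph{one} optimal $\xi^\star$ with a consistent assignment, so in the forward direction it is enough to exhibit the specific $\zeta_\alpha^{\bm{\tilde\xi}\opt}$ and one need not worry about other (possibly non-realizable) optimal perturbations; and (ii) that the infimum defining the static CVaR in Eq.~\ref{eq:static_cvar_policy_eval} is in fact attained, so that speaking of an optimal $\xi^\star$ is legitimate — this follows from linearity of the objective over the compact risk envelope $\cvarEnv{\alpha}{P^{\tilde\pi_h^\alpha}}$ and is already built into the $\min$ notation used there.
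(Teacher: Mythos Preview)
Your proposal is correct and follows essentially the same argument as the paper's own proof: both directions are obtained by shuttling between the history-level and state-level optimization problems via Proposition~\ref{proposition:connexion_dp_cvar} and Lemma~\ref{lemma:realizability_iff_consistency}, then closing the gap with the minimization characterization of Lemma~\ref{eq:dp_optim_problem} and the upper bound of Corollary~\ref{cor:dp_geq_static_eval}. Your additional remarks on the existential quantifier and on attainment of the minimum are accurate and make explicit points the paper leaves implicit.
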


%\mathieu{This entire paragraph is messy. Basically I want to say that for any risk-independent policy, the action-selection constraints are not a problem and the evaluation of the policy's static CVaR using the decomposition theorem succeeds. This implies the existence of a $\tilde \pi$ that maps to the policy.}
Theorem~\ref{thm:absence_of_gap} presents a formal characterization of necessary and sufficient conditions for when a CVaR evaluation gap occurs. It provides the valuable insight that, hidden under the mismatch between a risk-dependent policy's value function and its static CVaR lies an unsolvable constraint satisfaction problem on the risk level evolution. More specifically, risk-dependent policies can induce action-selection consistency constraints that cannot be satisfied simultaneously with the other risk propagation and state-level risk envelope requirements, hampering the evaluation of a policy's true CVaR.

%
%This framework clarifies why Markovian policies do not suffer from the evaluation gap, recovering a known result \citep{Hau2023Dynamic}.
%
%\mathieu{Not sure the argument is sound; perhaps I could just present the motivation in the text and more formally prove the corollary in a proof environment. Note that I need the result for the next section.}
%
The proposed constraints satisfaction perspective also clarifies why the evaluation is always accurate for Markovian policies (Thm. 3.1 in \citet{Hau2023Dynamic}). For such policies, the action-selection consistency constraints are non-binding, as the policy does not depend on the risk level. A consistent risk-assignment can therefore always be constructed by recursively applying the CVaR decomposition theorem (Thm.~\ref{thm:cvar_decomposition}), guaranteeing the absence of a CVaR evaluation gap, as detailed in the following corollary.

%Note however that Theorem~\ref{thm:absence_of_gap} also implies the absence of CVaR Evaluation Gap for \emph{risk-independent} policies $\pi_h$ that select an action irrespective of risk level $Y$, a result analogous to Theorem~3.1 from \citet{Hau2023Dynamic}. That is, because a consistent  risk-assignment $\mathcal Y$ can be obtained by applying the CVaR Decomposition Theorem (Thm.~\ref{thm:cvar_decomposition}) recursively $T$ times, then it follows that there must exist a risk-dependent policy $\tilde\pi$ such that the CVaR Evaluation Gap $\pi_h$ and $\tilde\pi$ is null. We formally state the result in the following corollary.

%\audrey{Il faut probablement ajuster le titre du résultat (je n'aime pas trop le terme ``correct'' ici), mais ça donne l'idée.}
\begin{restatable}[Existence of corresponding risk-dependent policy]{corollary}{corHistPolNoGap}
\label{corollary:markovian_policies_no_gap}
Under the conditions of Assumptions~\ref{assumptions}, for any Markovian policy $\pi:\mathcal H\to \actions$ and initial risk level $\alpha \in (0, 1]$, there exists a risk-dependent policy $\tilde\pi$ such that $\cvar{\alpha}{Z^{\pi}} = \V{\tilde\pi}{}(s_0, \alpha)$.
\end{restatable}
% \begin{proof}
% \mathieu{Résultat }
% A Markovian policy $\pi$ can be seen as a risk-dependent policy $\tilde\pi(S,Y) = \pi(S)$ that is constant with respect to $Y$. Therefore the action-selection consistency constraints all reduce to requiring $Y \in (0, 1]$ everywhere, making them non-restrictive. The CVaR Decomposition Theorem~\ref{thm:cvar_decomposition} provides a recipe to compute a risk-assignment $\mathcal Y$ that naturally satisfies the risk propagation and risk-augmented-state envelope constraint sets. Therefore, a consistent risk level assignment can always be constructed, making $\xi^\star$ realizable. By Theorem~\ref{thm:absence_of_gap}, the equality holds.
% \end{proof}

The biggest issue that stems from Theorem~\ref{thm:absence_of_gap} and Corollary~\ref{corollary:markovian_policies_no_gap} is that one cannot in general guarantee that \emph{all} risk-dependent policies will have optimal history perturbations $\xi\opt$ that have consistent risk-assignment constraints (Def.~\ref{def:risk_level_assignment_constraints}). As a result, the set of all risk-dependent policies may contain policies with a positive CVaR evaluation gap who will have an inaccurately high $\V{\tilde\pi}{}(s_0,\alpha)$, impeding on the optimality of algorithms searching for $\tilde\pi\opt \in \arg\max_{\tilde\pi} \V{\tilde\pi}{}(s_0,\alpha)$ like CVaR VI~\citep{Chow2015Risk}. We now present a worked example where the optimal risk-dependent policy has a positive CVaR evaluation gap and is therefore suboptimal.

%In particular, while CVaR VI~\citep{Chow2015Risk} can be shown to solve $\tilde\pi\opt \in \arg\max_{\tilde\pi} \V{\tilde\pi}{}(s_0,\alpha)$, because the 

% \mathieu{Pas certain que c'est le meilleur usage de l'environnement Example avec du italique partout ca fait moins clair je trouve. Peut-etre que j'ai mal compris ton commentaire.} \audrey{Ouin, c'est pas super. Tu peux essayer avec l'environnement ``boxed''.}
\subsection*{A deeper dive into the counterexample from \citet{Hau2023Dynamic}}

\def\rewardcolor{darkspringgreen}
\def\transitioncolor{darkslateblue}
\begin{figure}[t]
  \centering
  \begin{tikzpicture}[->,>=stealth',shorten >=1pt,node distance=1.8cm,semithick,level distance=23mm]
    % Define styles for each level for better control and readability
    \tikzstyle{level 1}=[sibling distance=25mm]
    \tikzstyle{level 2}=[sibling distance=15mm]
    \tikzstyle{level 3}=[sibling distance=9mm]
    \tikzstyle{level 4}=[sibling distance=5mm]

    % Root node of the tree
    \node (s0){$s_0, a_1$} [grow'=right,->]
    % First major branch from s0
    child {
      node (s1) {$s_1$}
      % First branch from s1
      % MODIFIED: Added anchor=west to align the incoming arrow
      child {
        node (s11) [anchor=north west] {$s_1,a_1$}
        % Inserted node s3 before the reward
        child {
          node (s3) {$s_3$}
          child {
            node[align=right,text width=5ex] (s111) { \textcolor{\rewardcolor}{$600$} }
          }
          edge from parent node[pos=0.5,fill=white, node font=\tiny] {\textcolor{\transitioncolor}{$0.75$}}
        }
        % Inserted node s4 before the reward
        child {
          node (s4) {$s_4$}
          child {
            node[align=right,text width=5ex] (s112) { \textcolor{\rewardcolor}{$-600$} }
          }
          edge from parent node[pos=0.5,fill=white,node font=\tiny] {\textcolor{\transitioncolor}{$0.25$}}
        }
      }
      % Second branch from s1
      child {
        node (s12) [anchor=west] {$s_1, a_2$}
        % Inserted node s5 before the reward
        child {
          node (s5) {$s_5$}
          child {
            node[align=right,text width=5ex] (s121) {\textcolor{\rewardcolor}{$0$}}
          }
        }
      }
      % Third branch from s1
      % MODIFIED: Added anchor=west to align the incoming arrow
      child {
        node (s13) [anchor=south west] {$s_1, a_3$}
        % Inserted node s6 before the reward
        child {
          node (s6) {$s_6$}
          child {
            node[align=right,text width=5ex] (s131) {\textcolor{\rewardcolor}{$-100$}}
          }
          edge from parent node[pos=0.5,fill=white,node font=\tiny] {\textcolor{\transitioncolor}{$0.5$}}
        }
        % Inserted node s7 before the reward
        child {
          node (s7) {$s_7$}
          child {
            % NOTE: Renamed node from s131 to s132 to avoid duplicate names
            node[align=right,text width=5ex] (s132) {\textcolor{\rewardcolor}{$400$}}
          }
          edge from parent node[pos=0.5,fill=white,node font=\tiny] {\textcolor{\transitioncolor}{$0.5$}}
        }
      }
      edge from parent node[pos=0.5,fill=white, node font=\tiny] {\textcolor{\transitioncolor}{$0.5$}}
    }
    % Second major branch from s0
    child { node (s2) {$s_2$}
      child {node (s21) [anchor=west] {$s_2,a_1$}
        % Inserted node s8 before the reward
        child {
          node (s8) {$s_8$}
          child {
            node[align=right,text width=5ex] (s211) {\textcolor{\rewardcolor}{$200$} }
          }
        }
      }
      edge from parent node[pos=0.5,fill=white, node font=\tiny] {\textcolor{\transitioncolor}{$0.5$}}
    };
\end{tikzpicture}
    \caption[Sample counterexample MDP from \citet{Hau2023Dynamic}]{Sample MDP from \citet{Hau2023Dynamic}. Next state transition probabilities are in \textcolor{\transitioncolor}{blue} while rewards are in \textcolor{\rewardcolor}{green}.}
    \label{fig:cvar-3actionscounterexample}
\end{figure}
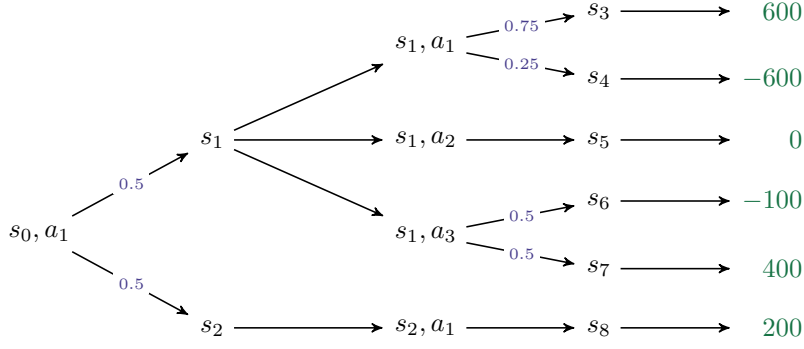

We now apply our newly introduced constraint satisfaction analysis to a counterexample presented in \citet{Hau2023Dynamic}. We use the MDP shown in Figure~\ref{fig:cvar-3actionscounterexample}, with horizon $T=2$ and initial risk $\alpha=0.5$. Note that our MDP differs superficially from the one in \citet{Hau2023Dynamic}, namely because we use a deterministic initial state $s_0$ with a single action available $a_1$ that does not impact the transition to the first state $S_1$, a procedure equivalent to the stochastic initial state presented in the original MDP.
%
%We now present an example where Theorem~\ref{thm:absence_of_gap} detects a CVaR Evaluation Gap. We use the MDP from \citet{Hau2023Dynamic}, shown in Figure~\ref{fig:cvar-3actionscounterexample}, with horizon $T=2$ and initial risk $\alpha=0.5$. 
% \audrey{Est-ce que c'est bien le même MDP que \citet{Hau2023Dynamic}? Leur Figure 1 a l'air différente.} \mathieu{Le MDP est à ma Figure 1 mais est à leur Figure 4 (appendix).} 

For our example, we consider the risk-dependent policy $\tilde{\pi}$ produced by CVaR VI \citep{Chow2015Risk} %\audrey{Est-ce que tes claims sont juste pour l'algo de Chow et al.? Est-ce que le claim peut être fait plus largement à la DP sur le state-augmented MDP avec les risk-dependent policies?} \mathieu{Le claim peut être fait pour des politiques risk-dependent quelconques, mais il faut pas oublier que comme dans le cas du VI, on se fout un peu que l'évaluation soit fausse en cours de route, c'est à la fin de l'algo que j'ai besoin que l'évaluation soit fiable (ou l'optimalité).} 
and its optimal state-level perturbations $\bm{\tilde\xi}\opt$ solution to the risk-dependent-policy value function (Eq.~\ref{eq:dp_cvar_eval}):
\begin{align*}
  \tilde\pi(s_1, y) = \begin{cases}
    a_1 & \text{if } y > 0.5 \\
    a_2 & \text{if } y \le 0.5
  \end{cases} &&
  \bm{\tilde\xi}\opt(s' | s, y, a) = 1, \quad \text{for all reachable } (s,y,a).
\end{align*}
The optimal state-level perturbations $\bm{\tilde\xi}\opt$ therefore apply no changes to next states sampled, so the risk level remains $Y_1=0.5$ after the transition from $s_0$. As a result, the corresponding history-dependent policy takes action $a_2$ when reaching $s_1$, that is $\tilde\pi_h^{0.5}\left((s_0,a_1,s_1)\right)=a_2$. Solving the static CVaR evaluation (Eq.~\ref{eq:static_cvar_policy_eval}), we find the corresponding history probabilities and optimal history perturbations $\xi\opt$:
\begin{align*}
    P^{\tilde\pi_h^{0.5}}(H) = \begin{cases} 0.5 & \text{if } H=(s_0,a_1,s_1,a_2,s_5) \\ 0.5 & \text{if } H=(s_0,a_1,s_2,a_1,s_8) \end{cases} &&
    \xi\opt(H) = \begin{cases} 2 & \text{if } H=(s_0,a_1,s_1,a_2,s_5)\\ 0 & \text{if } H=(s_0,a_1,s_2,a_1,s_8) \end{cases}
\end{align*}
For the optimal history perturbations $\xi\opt$ to be realizable, there must exist a consistent risk level assignment $\mathcal{Y}$ such that the risk-assignment consistency constraints (Def.~\ref{def:risk_level_assignment_constraints}) have non-empty intersection. Observing that histories of any length are fully defined by their final state since states never repeat in this MDP, the constraints can be expressed as:
\begin{enumerate}
    \item \textbf{Risk propagation}: Directly applying trajectory perturbations $\xi\opt$, we get
    \begin{align*}
        %\mathcal{Y}((s_0)) = \alpha = 0.5, && \mathcal{Y}(((s_0,a_1,s_1,a_2,s_5))) = 1 \quad \text{and} &&  \mathcal{Y}(((s_0,a_1,s_2,a_1,s_8))) = 0.
        \mathcal{Y}(s_0) = \alpha = 0.5, && \mathcal{Y}(s_5) = 1, && \text{and} &&  \mathcal{Y}(s_8) = 0.
    \end{align*}
    %$\mathcal{Y}((s_0)) = \alpha = 0.5$. At the final step, we must have $\mathcal{Y}((s_0,a_1,s_1,a_2,s_5)) = \alpha \cdot \xi\opt(\cdot) = 0.5 \cdot 2 = 1$ and $\mathcal{Y}((s_0,a_1,s_2,a_1,s_8)) = 0.5 \cdot 0 = 0$.
    \item \textbf{State-level risk envelope}: For $t=0$, the constraint is equivalent to $\mathcal{Y}(s_1) +\mathcal{Y}(s_2) = 1$. The constraints for $t=1$ are $\mathcal{Y}(s_5) = \mathcal{Y}(s_1)$ and $\mathcal{Y}(s_8) = \mathcal{Y}(s_2)$. Combining with the risk propagation constraint, we find that the risk assignment at $t=1$ must be 
    \begin{align}
    \label{eq:we_risk_envelope_constraint}
        \mathcal{Y}(s_1) = 1 && \text{and} && \mathcal{Y}(s_2) = 0.
    \end{align}
    \item \textbf{Action-selection consistency}: The history that ends in $s_5$ requires that at state $s_1$, the action $a_2$ was selected. According to the policy $\tilde\pi(s_1,y)$, this is only possible if the risk level satisfies 
    \begin{align}
    \label{eq:we_action_selection_constraint}
        \mathcal{Y}(s_1) \le 0.5.
    \end{align}
\end{enumerate}

Figure~\ref{fig:risk_assignment_impossibility} displays a visual representation of the risk-assignment constraints on the considered MDP given the risk-dependent policy $\tilde\pi$ obtained using CVaR VI and initial risk level $\alpha = 0.5$.
We observe that the state-level risk envelope (Eq.~\ref{eq:we_risk_envelope_constraint}) and the action-selection consistency (Eq.~\ref{eq:we_action_selection_constraint}) constraints are impossible to satisfy simultaneously in state $s_1$.  Thus, no consistent risk level assignment $\mathcal{Y}$ exists for $\xi\opt$. By Theorem~\ref{thm:absence_of_gap}, this confirms a positive CVaR evaluation gap, providing context to the results of~\citet{Hau2023Dynamic}. %Note that this is not an isolated case for $\alpha=0.5$; \citet{Hau2023Dynamic} empirically  %\mathieu{Add a note that Hau actually proves that this occurs for $\alpha \in [0.25, 0.75]$ to highlight this result is not a cherrypicked occurence (it's actually only picked because the numbers are nice)} %This contradiction i. %Numerical computation of both evaluations across all $\alpha \in (0, 1]$, shown in Figure~\ref{fig:cvar_eval_suboptimality}, confirms the gap exists for $\alpha \in [0.25, 0.75]$. The non-monotonicity of the true CVaR stems from the non-continuous change in the history-dependent policy $\tilde\pi_h^\alpha$ as the initial risk $\alpha$ varies.

\def\riskpropagationcolor{brown}
\def\rasenvcolor{violet}
\def\actionselectioncolor{teal}
\begin{figure}[t]
  \centering
  \begin{tikzpicture}[->,>=stealth,shorten >=1pt,node distance=1.8cm,semithick,level distance=23mm]
    % Define styles for each level for better control and readability
    \tikzstyle{level 1}=[sibling distance=14mm]
    \tikzstyle{level 2}=[sibling distance=14mm]

    % Root node of the tree
    \node (s0) [anchor=west, label={left:{
    \begin{tabular}{c}
         $\Big\{\textcolor{\riskpropagationcolor}{\mathcal Y(s_0) = 0.5}\Big\}$ 
    \end{tabular}}}] {$s_0, a_1$} [grow'=right,->]
    % First major branch from s0
    child {
      % --- MODIFIED NODE (s1) ---
      % The label is now a single 'above' label containing a tabular environment
      % to stack the two constraints vertically.
      % \textcolor is used inside the math mode to color only the text,
      % leaving the brackets in the default (dark) color.
      node (s1) [anchor=west, label={above:{%
        \begin{tabular}{c}
          $\Big\{\textcolor{\actionselectioncolor}{\mathcal Y(s_1) = 0.5}\Big\} \cap \Big\{\textcolor{\rasenvcolor}{\mathcal Y(s_1) = 1}\Big\}$
        \end{tabular}%
      }}] {$s_1$} 
      % First branch from s1
      child {
        node (s12) {$s_1, a_2$}
        % Inserted node s5 before the reward
        child {
          node (s5) [anchor=west, label={right:{
            \begin{tabular}{c}
                 $\Big\{\textcolor{\riskpropagationcolor}{\mathcal Y(s_5) = 1}\Big\}$ 
            \end{tabular}}}] {$s_5$}
        }
      }
    }
    % Second major branch from s0
    child { node (s2) [anchor=west, label={below:{
        \begin{tabular}{c}
             $\Big\{\textcolor{\rasenvcolor}{\mathcal Y(s_2) = 0}\Big\}$ 
        \end{tabular}}}] {$s_2$}
      child {node (s21) {$s_2,a_1$}
        % Inserted node s8 before the reward
        child {
          node (s8) [anchor=west, label={right:{
        \begin{tabular}{c}
             $\Big\{\textcolor{\riskpropagationcolor}{\mathcal Y(s_8) = 0}\Big\}$ 
        \end{tabular}}}] {$s_8$}
        }
      }
    };
\end{tikzpicture}
    \caption[Visual representation of the  risk-assignment constraints on the MDP from Figure~\ref{jmlr_fig:cvar-3actionscounterexample}]{Visual representation of the  risk-assignment constraints on the MDP from Figure~\ref{fig:cvar-3actionscounterexample}, with the policy $\tilde\pi$ obtained using CVaR VI \citep{Chow2015Risk} and setting $\alpha=0.5$. Risk propagation constraints are in \textcolor{\riskpropagationcolor}{brown}, state-level risk envelope constraints are in \textcolor{\rasenvcolor}{purple}, and action selection constraints are in \textcolor{\actionselectioncolor}{teal}.}
    \label{fig:risk_assignment_impossibility}
\end{figure}

\begin{remark}
Not all risk-dependent policies have a CVaR evaluation gap. In Appendix~\ref{appendix:policy_no_evaluation_gap}, we present such a risk-dependent policy on the same MDP from Figure~\ref{fig:cvar-3actionscounterexample} for which the dual-based DP evaluation accurately represents the static CVaR of its history-dependent counterpart. 
\end{remark}

\section{From Impossible Evaluation to Impossible Uniform Optimality}
\label{sec:impossibility_result}

%Leveraging the risk assignment constraint perspective developed in the previous section, we now show that there exists an MDP where it is impossible for a single risk-dependent policy $\tilde{\pi}$ to be uniformly optimal. This suggests that the limitations of the dual CVaR decomposition are fundamental to the current problem formulation and go beyond the choice of algorithm. We begin by formally defining uniform optimality.

One of the most significant consequence of using the dual-based CVaR decomposition~\citep{Pflug2016Time} and finding a single optimal risk-dependent policy $\tilde \pi$ is that it could simultaneously retrieve the optimal history-dependent policy for \textit{any} initial risk-level~\citep{Chow2015Risk}, a property we will refer to as \emph{uniformly optimality}. This property represented a massive advantage over the primal-based decomposition~\citep{Bauerle2011Markov}, which is constrained to find the optimal policy for a \textit{single} initial risk-level. Leveraging the risk-assignment constraints perspective developed in the previous section, we now show that there exists an MDP where it is impossible for a single risk-dependent policy $\tilde{\pi}$ to be uniformly optimal, proving this inherent property to be impossible in general.
%
% Leveraging the risk-assignment constraints perspective developed in the previous section, we now show that there exists an MDP where it is impossible for a single risk-dependent policy $\tilde{\pi}$ to be \emph{uniformly optimal}, that is, being optimal for all initial risk levels $\alpha \in (0, 1]$ simultaneously. This suggests that the limitations of the dual CVaR decomposition are fundamental to the current problem formulation and cannot be solved by simple algorithmic improvements. %\audrey{``algorithm'' réfère à quoi précisément? DP algorithm?} \mathieu{C'est que le issue vient de la formulation du problème (être optimal pour tout $(S,Y)$) et donc peu importe l'algo qu'on prend pour s'y rendre le problème va rester. J'ai reformulé je crois que c'est plus clair maintenant.}
%
%For a policy $\tilde{\pi}$ to achieve uniform optimality, its corresponding history-dependent policy $\tilde\pi_h^\alpha$ must match the true optimal history-dependent policy, denoted $\pi_{h,\alpha}^{\star}$, for every $\alpha \in (0, 1]$. As the optimal policy $\pi_{h,\alpha}^{\star}$ changes with $\alpha$, it imposes varying requirements on the actions selected by $\tilde\pi$. Each optimal policy $\pi_{h,\alpha}^{\star}$ also induces a unique, realizable risk assignment $\mathcal{Y}_\alpha$, as established by Corollary~\ref{corollary:markovian_policies_no_gap}, creating a set of risk-augmented-state action selection constraints. We will refer to these constraints as the \emph{optimal-action-selection constraints}, which we now formally define.
%
We begin by formalizing the notion of uniform optimality.

\begin{definition}[Uniformly optimal policy]
\label{def:uniform_optimal_policy}
A risk-dependent policy $\tilde\pi$ is \emph{uniformly optimal} if its corresponding history-dependent policy $\tilde\pi_h^\alpha$ is optimal for all initial risk levels $\alpha \in (0, 1]$. That is, for all $\alpha \in (0, 1]$, we have
\begin{align*}
    \cvar{\alpha}{Z^{\tilde\pi_h^\alpha}} = \max_{\pi_h} \cvar{\alpha}{Z^{\pi_h}}.
\end{align*}
\end{definition}
To simplify our argument, we will assume without loss of generality that there is always a single optimal history-dependent policy for every $\alpha$. That is $\arg\min_{\pi_h} \cvar{\alpha}{Z^{\pi_h}}$ is always a singleton, where we break ties consistently for policies with the same CVaR values. %\mathieu{This assumption is WLOG: in case multiple policies have the same CVaR, it needs not matter which way we get the particular CVaR, only that we reach it?} \mathieu{The actual mathematically rigorous way of dealing with more than one optimal policy would be to relax the optimal-action-selection constraint in this $\alpha$ to impose $\tilde\pi$ follows either one of the optimal policy, but it gets messy with the notation for not much gain in the end.}

For a policy $\tilde{\pi}$ to achieve uniform optimality, its selected actions must align with those of the optimal history-dependent policy $\pi_{h,\alpha}^{\star}$ for every value of $\alpha$ and all possible histories. By Corollary~\ref{corollary:markovian_policies_no_gap}, each optimal policy $\pi_{h,\alpha}^{\star}$ induces a unique risk-dependent policy $\tilde\pi_\alpha$ and risk level assignment $\mathcal{Y}_\alpha$. %\mathieu{Il y a un peu de confusion ici et dans les figures entre $\mathcal Y_\alpha$ et seulement $\mathcal Y$}. 
To be uniformly optimal, a policy $\tilde\pi$ therefore has to ensure it simultaneously follows every $\tilde\pi_\alpha$ alongside its risk level assignment $\mathcal Y_\alpha$. These requirements can be grouped in a set of \emph{optimal-action-selection constraints}.

% \begin{definition}[Optimal-Action-Selection Constraints]
% \label{def:optimal_action_consistency}
% For a fixed MDP, the \textbf{optimal-action-selection constraints} impose that for every initial risk level $\alpha \in (0, 1]$, histories $H \in \mathcal H_T$ with $P^{\pi_h\opt(\alpha)}(H) > 0$, and time steps $t=0,\dots,T-1$, we must have
% \begin{align*}
%     \tilde{\pi}\big(S_t, \mathcal{Y}_\alpha(H_{0:t})\big) = \pi_{h,\alpha}^{\star}(H_{0:t}),
% \end{align*}
% where $\mathcal{Y}_\alpha$ is the risk level assignment corresponding to $\pi_{h,\alpha}^{\star}$ with initial risk $\alpha$.
% \end{definition}

%The existence of a uniformly optimal policy requires the optimal-action-selection constraints to be simultaneously satisfiable, as we show in the following theorem.

\begin{restatable}[Uniform optimality constraints]{proposition}{propositionConditionUniformOptimality}%[Constraints for Uniform Optimality]
\label{prop:condition_for_uniform_optimality}
A risk-dependent policy $\tilde\pi$ is uniformly optimal if and only if it simultaneously satisfies all the \emph{optimal-action-selection constraints}%, defined as
\begin{align*}
%\label{eq:optimal_action_constraints_in_prop}
    \tilde{\pi}\big(S_t, \mathcal{Y}_\alpha(H_{0:t})\big) = \pi_{h,\alpha}^{\star}(H_{0:t}),
\end{align*}
defined for all initial risk levels $\alpha \in (0, 1]$, optimal policies $\pi_{h,\alpha}^{\star}$, histories $H \in \mathcal H_T$ with $P^{\pi_{h,\alpha}^{\star}}(H) > 0$, and time steps $t=0,\dots,T-1$.
\end{restatable}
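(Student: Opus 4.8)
The plan is to prove Proposition~\ref{prop:condition_for_uniform_optimality} by chaining together Definition~\ref{def:uniform_optimal_policy}, Corollary~\ref{corollary:markovian_policies_no_gap}, and the characterization of realizability via consistent risk-level assignments (Lemma~\ref{lemma:realizability_iff_consistency} and Theorem~\ref{thm:absence_of_gap}). The statement is an ``if and only if'', so I would prove the two directions separately, but both hinge on the same observation: a risk-dependent policy $\tilde\pi$ matches a target history-dependent policy $\pi_{h,\alpha}^\star$ for a given $\alpha$ precisely when the risk-level trajectory $\tilde\pi$ induces from $(s_0,\alpha)$ coincides, along every positive-probability history of $\pi_{h,\alpha}^\star$, with the risk-level assignment $\mathcal{Y}_\alpha$ associated to $\pi_{h,\alpha}^\star$ by Corollary~\ref{corollary:markovian_policies_no_gap}.

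First I would fix $\alpha \in [0,1]$ and recall that, by Corollary~\ref{corollary:markovian_policies_no_gap}, the optimal history-dependent policy $\pi_{h,\alpha}^\star$ admits a risk-dependent policy whose value function equals $\cvar{\alpha}{Z^{\pi_{h,\alpha}^\star}}$; by Theorem~\ref{thm:absence_of_gap} and Lemma~\ref{lemma:realizability_iff_consistency}, the optimal history perturbation $\xi_\alpha^\star$ is realizable, so there is a consistent risk-level assignment $\mathcal{Y}_\alpha$ satisfying the three constraints of Definition~\ref{def:risk_level_assignment_constraints}, in particular action-selection consistency: $\pi_{h,\alpha}^\star(H_{0:t}) = \tilde\pi_\alpha\big(S_t, \mathcal{Y}_\alpha(H_{0:t})\big)$ for every positive-probability $H$ and every $t$. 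For the \emph{(⇐)} direction, suppose $\tilde\pi$ satisfies all optimal-action-selection constraints. I would argue by induction on $t$ that, for each $\alpha$, running $\tilde\pi$ from $(s_0,\alpha)$ produces exactly the risk-level trajectory $\mathcal{Y}_\alpha$ along every history in the support of $\pi_{h,\alpha}^\star$ --- the base case $t=0$ is $\mathcal{Y}_\alpha(s_0)=\alpha$, and the inductive step uses that $\tilde\pi$ at the assigned risk-level $\mathcal{Y}_\alpha(H_{0:t})$ picks the optimal action $\pi_{h,\alpha}^\star(H_{0:t})$ (by hypothesis), so the state-level risk envelope constraint for $\mathcal{Y}_\alpha$ coincides with the one in $\tilde\pi$'s value function, and by the uniqueness of the optimal perturbation guaranteed around Lemma~\ref{eq:dp_optim_problem} and the tie-breaking assumption, the next risk-level matches $\mathcal{Y}_\alpha(H_{0:t+1})$. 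Hence $\tilde\pi_h^\alpha = \pi_{h,\alpha}^\star$ for all $\alpha$, so $\cvar{\alpha}{Z^{\tilde\pi_h^\alpha}} = \cvar{\alpha}{Z^{\pi_{h,\alpha}^\star}} = \max_{\pi_h}\cvar{\alpha}{Z^{\pi_h}}$, i.e., $\tilde\pi$ is uniformly optimal.

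For the \emph{(⇒)} direction, suppose $\tilde\pi$ is uniformly optimal. Then for each $\alpha$, $\tilde\pi_h^\alpha$ attains the max, and by the singleton assumption $\tilde\pi_h^\alpha = \pi_{h,\alpha}^\star$. I would then unwind the definition of the induced history-dependent policy: $\tilde\pi_h^\alpha(H_{0:t})$ is computed by running $\tilde\pi$ forward while updating the risk-level via the optimal perturbation $\tilde\xi^\star$ of $\tilde\pi$'s own value function. Since $\tilde\pi_h^\alpha = \pi_{h,\alpha}^\star$, this induced risk-level trajectory must be a consistent risk-level assignment for $\xi_\alpha^\star$ (it satisfies risk propagation by Proposition~\ref{proposition:connexion_dp_cvar}, the state-level risk envelope by construction of $\tilde\xi^\star$, and action-selection consistency trivially); by the uniqueness of the consistent assignment --- which follows from the tie-breaking/singleton convention applied through the recursion --- it equals $\mathcal{Y}_\alpha$. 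Substituting back, $\tilde\pi\big(S_t,\mathcal{Y}_\alpha(H_{0:t})\big) = \tilde\pi_h^\alpha(H_{0:t}) = \pi_{h,\alpha}^\star(H_{0:t})$, which is exactly the optimal-action-selection constraint.

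The main obstacle I anticipate is handling the uniqueness of the risk-level assignment cleanly: Definition~\ref{def:risk_level_assignment_constraints} does not obviously pin down $\mathcal{Y}_\alpha$ uniquely, since zero-probability branches are unconstrained and, more subtly, the optimal state-level perturbation need not be unique in degenerate cases. I would address this by restricting attention to positive-probability histories (as the constraints themselves do), by invoking the paper's standing assumption that $\arg\min_{\pi_h}\cvar{\alpha}{Z^{\pi_h}}$ is a singleton with consistent tie-breaking, and by arguing that whenever two consistent assignments induce the same history-dependent policy they are interchangeable for the purposes of the constraint statement --- so one may canonically take $\mathcal{Y}_\alpha$ to be the one produced by the value-function recursion of the (unique) induced risk-dependent policy, exactly as in the proof of Corollary~\ref{corollary:markovian_policies_no_gap}. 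The remaining steps are routine inductions and definitional unwinding.
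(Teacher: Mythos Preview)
Your proposal is correct and follows essentially the same two-direction argument as the paper: both rely on the singleton assumption to identify $\tilde\pi_h^\alpha$ with $\pi_{h,\alpha}^\star$, and then reduce the constraint to matching actions along the risk-level assignment supplied by Corollary~\ref{corollary:markovian_policies_no_gap}. You are in fact more explicit than the paper about the induction needed in the $(\impliedby)$ direction to align $\tilde\pi$'s own risk-level trajectory with $\mathcal{Y}_\alpha$, and about the non-uniqueness caveat for $\mathcal{Y}_\alpha$; the paper absorbs both points into its tie-breaking convention without spelling out the recursion.
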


%\audrey{Valider que le terme ``time steps'' est correct en anglais.}

Proposition~\ref{prop:condition_for_uniform_optimality} provides a clear test for uniform optimality: one must check if the set of optimal-action-selection constraints is feasible on an MDP to know whether or not there exists a uniformly optimal policy. If, for two different initial risk levels, the respective optimal policies generate the same risk-augmented state $(S,Y)$ but require different actions, then no single deterministic policy $\tilde\pi$ can satisfy all constraints simultaneously. This conflict is the basis for the following impossibility result.

%Proposition~\ref{prop:condition_for_uniform_optimality} provides a clear test for uniform optimality. A single policy $\tilde\pi$ can only exist if this collection of constraints, gathered across all possible values of $\alpha$, is not contradictory. If we can find two different initial risk levels, $\alpha_1$ and $\alpha_2$, that require different actions at the \emph{same} risk-augmented state $(S,Y)$, then no single policy $\tilde\pi$ can satisfy both constraints, and uniform optimality is impossible. This is the basis for our impossibility result.

%Lemma~\ref{lemma:uniform_optimality_iff_constraints} provides a clear test for uniform optimality: one must check if the set of optimal action-selection constraints is feasible. If, for two different initial risk levels, the respective optimal policies generate the same risk-augmented state $(S,Y)$ but require different actions, then no single deterministic policy $\tilde\pi$ can satisfy all constraints simultaneously. This conflict is the basis for the following impossibility result.

\begin{theorem}[Impossible uniform optimality counterexample]
\label{thm:impossibility}
There exists an MDP satisfying Assumptions~\ref{assumptions} for which no single risk-dependent policy $\tilde\pi: \states \times (0, 1] \to \actions$ is uniformly optimal.%(Def.~\ref{def:uniform_optimal_policy}).% That is,
% \begin{align*}
%     \nexists \, \tilde\pi \text{ s.t. } \cvar{\alpha}{Z^{\tilde\pi_h^\alpha}} = \max_{\pi_h} \cvar{\alpha}{Z^{\pi_h}} \quad \forall \alpha \in (0, 1].
% \end{align*}
\end{theorem}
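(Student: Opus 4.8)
The theorem is an existence claim, so the natural strategy is to exhibit an explicit MDP and show the set of optimal-action-selection constraints from Proposition~\ref{prop:condition_for_uniform_optimality} is infeasible on it. A promising construction builds directly on the counterexample already analyzed (Figure~\ref{fig:cvar-3actionscounterexample}): I would take an MDP with horizon $T=2$ in which the initial transition splits into a ``risky'' branch and a ``safe'' branch, and in the risky branch the agent at state $s_1$ has at least two actions whose relative attractiveness flips as the risk level changes. The idea is to pick rewards so that for a high initial risk level $\alpha_1$ the optimal history-dependent policy $\pi_{h,\alpha_1}^\star$, together with its induced risk-level assignment $\mathcal Y_{\alpha_1}$, reaches the augmented state $(s_1, y)$ for some specific $y$ and prescribes action $a$ there, while for a different initial risk level $\alpha_2$ the optimal policy $\pi_{h,\alpha_2}^\star$ reaches the \emph{same} augmented state $(s_1, y)$ — because the optimal state-level perturbation in the $\alpha_2$ case pushes the risk level to exactly that same $y$ — but prescribes a different action $a' \neq a$.

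\textbf{Key steps in order.} First, fix the MDP and, for two chosen initial risk levels $\alpha_1 \neq \alpha_2$, compute the optimal history-dependent policies $\pi_{h,\alpha_1}^\star$ and $\pi_{h,\alpha_2}^\star$ by directly evaluating the static CVaR (Eq.~\ref{eq:static_cvar_policy_eval}) over the finitely many deterministic policies — since $T=2$ and the state/action spaces are tiny, this is a finite comparison of CVaR values. Second, for each of these optimal policies invoke Corollary~\ref{corollary:markovian_policies_no_gap} to get the corresponding risk-dependent policy and, via the risk-assignment consistency constraints (Def.~\ref{def:risk_level_assignment_constraints}), extract the induced risk-level assignment $\mathcal Y_{\alpha_i}$ along the realized histories. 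Third, identify a history prefix $H_{0:1}$ ending in the shared state $s_1$ such that $\mathcal Y_{\alpha_1}(H_{0:1}) = \mathcal Y_{\alpha_2}(H_{0:1}) =: y^\star$ while $\pi_{h,\alpha_1}^\star(H_{0:1}) \neq \pi_{h,\alpha_2}^\star(H_{0:1})$. Fourth, apply Proposition~\ref{prop:condition_for_uniform_optimality}: any uniformly optimal $\tilde\pi$ would have to satisfy both $\tilde\pi(s_1, y^\star) = \pi_{h,\alpha_1}^\star(H_{0:1})$ and $\tilde\pi(s_1, y^\star) = \pi_{h,\alpha_2}^\star(H_{0:1})$, which is impossible for a deterministic $\tilde\pi$; conclude no uniformly optimal policy exists.

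\textbf{The main obstacle.} The delicate part is the \emph{arithmetic of the risk-level collision}: I must choose transition probabilities and rewards so that the optimal perturbations for two genuinely different initial risk levels land on precisely the same intermediate risk value at a state where the two optimal policies disagree. This requires simultaneously controlling (i) which action is optimal downstream in $s_1$ as a function of the running risk level (forcing a threshold crossing inside $[0,1]$, as in the $y \le 0.5$ vs. $y > 0.5$ behavior of the worked example), (ii) the optimal CVaR perturbation at the root for each $\alpha_i$, which is itself the solution of a small linear program over the risk envelope $\cvarEnv{\alpha_i}{P(\cdot|s_0,a_1)}$ and determines how $\alpha_i$ is redistributed between the risky and safe branches, and (iii) the global optimality comparison so that the policies I claim are optimal genuinely are. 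A clean way to manage (ii) is to make the safe branch's CVaR clearly dominant or clearly irrelevant over the relevant risk range so the root perturbation concentrates budget predictably; then the collision condition reduces to one or two algebraic equations in the free parameters, which I would solve explicitly and then verify the downstream action disagreement by a direct CVaR computation. Once such parameters are pinned down, writing out the three consistency constraints for each $\alpha_i$ (exactly as in the worked example following Figure~\ref{fig:risk_assignment_impossibility}) makes the contradiction transparent, and the proof closes by Proposition~\ref{prop:condition_for_uniform_optimality}.
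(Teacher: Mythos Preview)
Your proposal is correct and follows essentially the same approach as the paper's proof. The paper uses the MDP of Figure~\ref{fig:cvar-3actionscounterexample} as-is: for $\alpha=0.25,\,0.5,\,0.75$ the optimal history-dependent policies are $\pi_h^{(2)},\pi_h^{(3)},\pi_h^{(1)}$ respectively, and in all three cases the induced risk-level assignment satisfies $\mathcal Y_\alpha(s_1)=0.5$, so the collision you anticipate occurs at $(s_1,0.5)$ without any parameter tuning.
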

% \mathieu{Pas certain que la definition math apporte de la valeur, une reference à la déf de uniform optimality suffirait?}
\begin{proof}
We prove the result by providing an example MDP for which uniform optimality is impossible. We once again use the MDP from \citet{Hau2023Dynamic}, displayed in  Figure~\ref{fig:cvar-3actionscounterexample}, with horizon $T=2$. This MDP contains only three different history-dependent policies, which are all fully characterized by the selected action in state $s_1$. We can therefore easily compute the static CVaR of history-dependent policies to see which one is optimal at different initial risk level $\alpha$. To simplify notation, let us denote the three available policies $\pi_h^{(i)}$ to indicate which action $a_i$ they select in $s_1$. 

Figure~\ref{fig:cvar_optimal_policies} shows the CVaR$_\alpha$ of each policy based on the initial risk level $\alpha$. Crucially, Figure~\ref{fig:cvar_optimal_policies} shows that the optimal action choice is strictly dependent on the initial risk level. This observation serves as the foundational mechanism for our counterexample, illustrating how action requirements for optimality at different risk levels eventually become contradictory.
We can deduce the optimal policy:
\begin{align*}
\pi_{h,\alpha}^\star \coloneq \begin{cases}
    \pi_h^{(2)} & \text{if} \quad \alpha \in [0, 0.375)\\
    \pi_h^{(3)} & \text{if} \quad \alpha \in [0.375, 0.6875)\\
    \pi_h^{(1)} & \text{if} \quad \alpha \in [0.6875, 1],
\end{cases}
\end{align*}
meaning that the optimal risk-seeking ($\alpha$ close to 1) choice is to select action $a_1$, while the optimal risk-averse ($\alpha$ close to 0) choice is instead to pick $a_2$, with $a_3$ being the optimal choice at a moderate risk-level. 

\begin{figure}[t]
\centering
\includegraphics[width=0.85\textwidth]{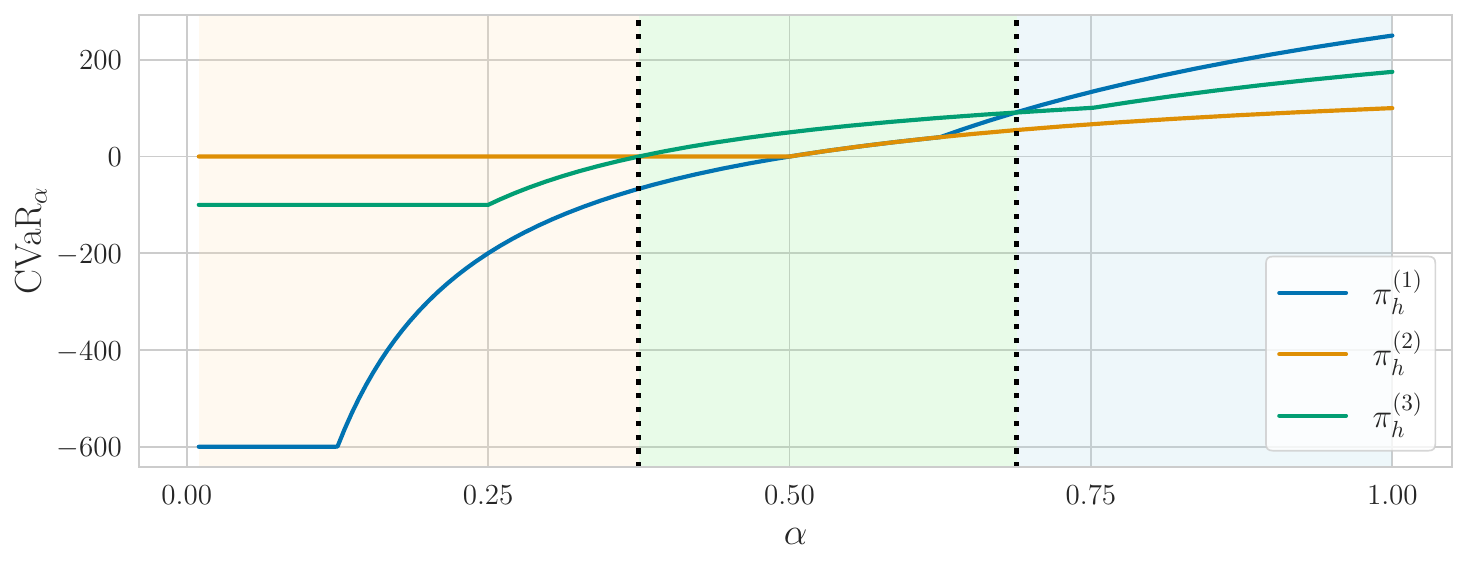}
\caption[Evolution of the static CVaR evaluation of all policies on the MDP presented in Figure~\ref{jmlr_fig:cvar-3actionscounterexample}]{Evolution of the static CVaR evaluation of all policies on the MDP presented in Figure~\ref{fig:cvar-3actionscounterexample} at different initial risk levels $\alpha$. Shaded regions represent the optimal policy $\pi_h\opt$ at a given initial risk level $\alpha$.}
\label{fig:cvar_optimal_policies}
\end{figure}

Because all three policies $\pi_h^{(i)}$ are Markovian, Corollary~\ref{corollary:markovian_policies_no_gap} applies and we know that computing their respective value functions (Eq.~\ref{eq:dp_cvar_eval}) grants us state-level perturbations $\bm{\tilde\xi}$ from which we can extract a consistent risk assignment. Combining these risk assignments with the $\alpha$ values where each $\pi_h^{(i)}$ is optimal, we can extract the optimal-action-selection constraints. To streamline our argument, let us consider the state-level risk envelope constraints imposed at state $s_1$ for cases $\alpha=0.25$, $\alpha=0.5$, and $\alpha=0.75$ in particular. For these, the optimal-action-selection-constraints (Proposition~\ref{prop:condition_for_uniform_optimality}) in $s_1$ yield:
\begin{align*}
    \pi_{h,0.25}^\star=\pi_h^{(2)} &&\implies&& \mathcal Y_{0.25}(s_1) = 0.5 &&\implies&& \tilde\pi\opt(s_1, 0.5) = a_2\\
    \pi_{h,0.5}^\star=\pi_h^{(3)} &&\implies &&\mathcal Y_{0.5}(s_1) = 0.5 &&\implies&& \tilde\pi\opt(s_1, 0.5) = a_3\\
    \pi_{h,0.75}^\star=\pi_h^{(1)} && \implies&& \mathcal Y_{0.75}(s_1) = 0.5&&\implies&& \tilde\pi\opt(s_1, 0.5) = a_1
\end{align*}
That is, in order to be simultaneously optimal for initial risk levels $\alpha=0.25$, $\alpha=0.5$, and $\alpha=0.75$, a risk-dependent policy $\tilde\pi$ would be required to take \textit{all} actions $a_1$, $a_2$, and $a_3$ in state $S_t=s_1$ when the risk level is $Y_t=0.5$, proving the impossibility of uniform optimality. 
\end{proof}

%\mathieu{This makes for a weird framing with the fact that I am currently \textit{proving} a theorem and there is 0 need to look at other values after the result has been proven. Maybe I should move it outside the proof? But then it becomes weird to link to an MDP only introduced in the proof?}
%The conflict identified in the proof is not an isolated incident. Figure~\ref{fig:cvar_impossibility_demo} illustrates how the risk level requirement at state $s_1$ evolves as a function of the initial risk $\alpha$ for each of the three policies. The shaded regions indicate which policy is optimal for a given $\alpha$. The plot reveals multiple overlaps where different optimal policies produce the same risk level at $s_1$, creating irreconcilable constraints for any single risk-dependent policy.

\begin{figure}[t]
\centering
\includegraphics[width=0.85\textwidth]{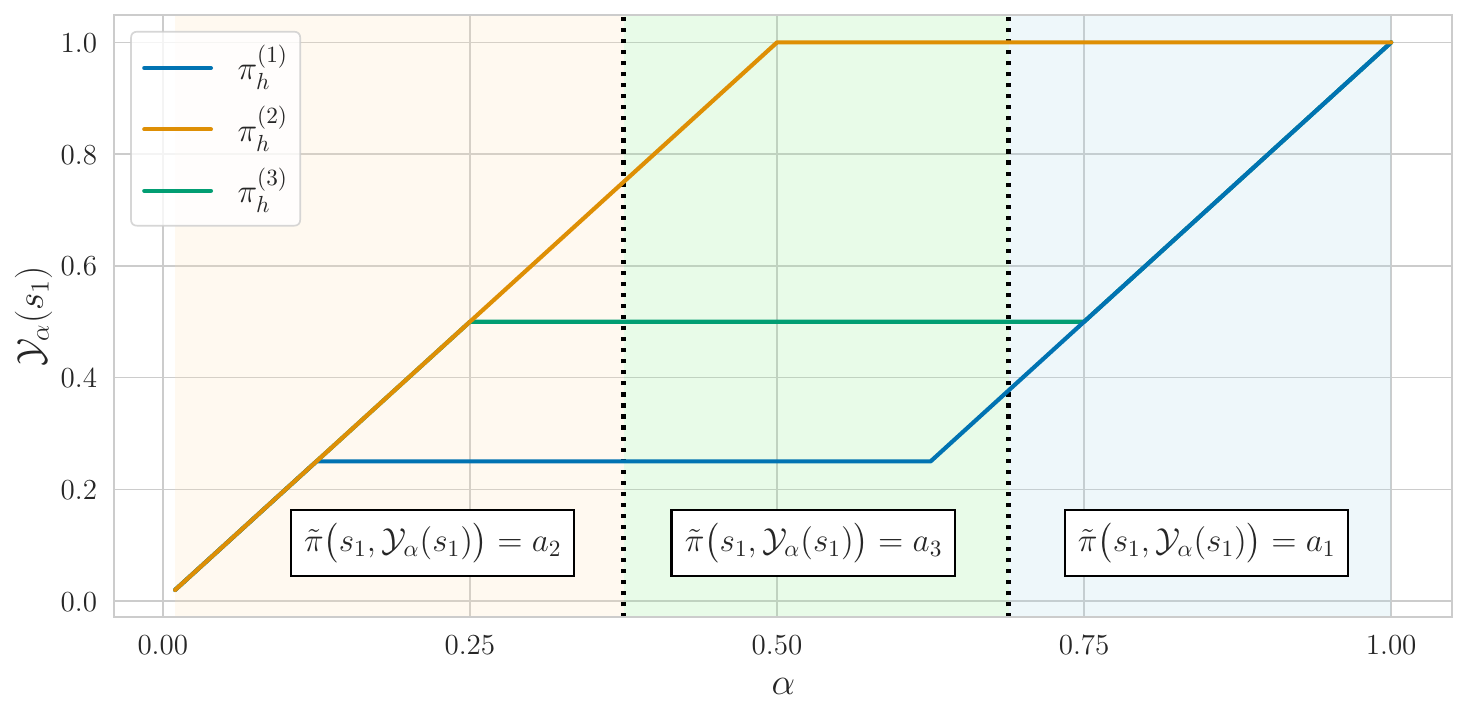}
\caption[Relation between initial risk level and the corresponding risk level for all policies on the MDP presented in Figure~\ref{jmlr_fig:cvar-3actionscounterexample}]{Relation between initial risk level $\alpha$ and the corresponding risk level $\mathcal Y_\alpha(s_1)$ for all three possible policies on the MDP presented in Figure~\ref{fig:cvar-3actionscounterexample}. Colored areas indicate the optimal policies, with the resulting constraint on $\tilde\pi$ displayed explicitly.}
\label{fig:cvar_impossibility_demo}
\end{figure}

Note that the range of values preventing the existence of a uniformly optimal policy in the MDP presented in Figure~\ref{fig:cvar-3actionscounterexample} extends beyond the $\alpha$ values presented in the proof of Theorem~\ref{thm:impossibility}. To illustrate this, Figure~\ref{fig:cvar_impossibility_demo} displays the optimal-action-selection constraints for all $\alpha \in (0, 1]$. Colored regions indicate which policy is considered optimal for the given initial risk level $\alpha$, with a box highlighting the explicit constraint for every region. For every policy, the risk-assignment in $s_1$ ($\mathcal{Y}_\alpha(s_1)$) it corresponds to for an initial risk-level $\alpha$ is shown in a separate color. The figure highlights the presence of a wide range of initial risk levels $\alpha$ where the optimal-action-selection constraints overlap, precluding the presence of a single optimal risk-dependent policy $\pi$ simultaneously optimal for these initial risk levels. %\mathieu{Should I explicitly represent the $\alpha \times \mathcal Y(s_1)$ region where it is impossible to be uniformly optimal?}

\section{Conclusion}
\label{sec:conclusion}

In this work, we diagnosed the root cause of failures in the static dual CVaR dynamic program decomposition. We started by framing the problem of evaluating a policy's static CVaR as two distinct but related optimization tasks: one over history-level perturbations for the true static CVaR and another over state-level perturbations for the dynamic program. This perspective revealed that a CVaR evaluation gap arises precisely when a set of risk-assignment consistency constraints (Def.~\ref{def:risk_level_assignment_constraints}) have an empty intersection (Thm.~\ref{thm:absence_of_gap}). Our findings provide another independent proof that Markovian policies enjoy a null CVaR evaluation gap, which is unfortunately not the case for risk-dependent policies. Building on our risk-assignment constraints, we proved that the dual decomposition itself is fundamentally limited by identifying an MDP where no single risk-dependent policy can be uniformly optimal for all initial risk levels, as the action requirements for optimality at different risk levels become contradictory (Thm.~\ref{thm:impossibility}).

\paragraph{Future Directions}

% While the set of constraints we introduced in this paper was mainly used to provide negative results on the dual-based dynamic program decomposition of CVaR, namely proving that a single risk-dependent policy cannot be optimal for all initial risk sources and that some risk-dependent policies cannot be accurately evaluated, we are optimistic it can also lead to constructive results as well. For instance, because every history-dependent policy can be represented by multiple risk-dependent policies at a given initial risk-level, it remains an open question to know whether there always exists one such risk-dependent policy that does not suffer from the CVaR evaluation gap. An affirmative answer would imply that the dual-based CVaR decomposition can safely identify optimal history-dependent policies for specific, fixed risk levels. This would preserve the computational advantages of the dual formulation, provided the search space can be effectively constrained to valid, CVaR evaluation gap-free risk-dependent policies.

Our findings show that seeking a single, uniformly optimal policy with the current dual decomposition approach is flawed because it can involve incompatible constraint sets. Future work should therefore pivot towards developing algorithms that find the optimal policy for a specific initial risk level, similar to methods used for primal-based decompositions~\citep{Bauerle2011Markov}. Even when limiting our attention to a single initial risk level, many interesting questions surrounding proper evaluation remain. 
\begin{enumerate}
    \item \textit{What is the extent of risk-dependent policies that have a CVaR evaluation gap at any given initial risk-level?} On one hand, we presented an instance where a risk-dependent policy presented a CVaR evaluation gap (Sec.~\ref{sec:evaluation_gap}). On the other hand, one can also easily find examples that do not suffer from a CVaR evaluation gap (App.~\ref{appendix:policy_no_evaluation_gap}). This leaves open the characterization of the occurrence of evaluation gaps. For instance, can we always find a corresponding risk-dependent policy without evaluation gap for any history-dependent policy and initial risk-level? Is this characterization dependent on intrinsic properties of the MDP or universally applicable?
    
    \item \textit{Can we determine if a risk-dependent policy will have a CVaR evaluation gap without access to its corresponding history-level perturbations?} The risk assignment consistency constraints (Def.~\ref{def:risk_level_assignment_constraints}) rely on knowledge of the corresponding history perturbations. This is problematic in practice as it requires the standalone static evaluation of a risk-dependent policy's history-dependent counterpart to evaluate whether the risk-level decomposition is accurate. One possible path forward we envision is to leverage the set of all risk-dependent policies that share the same history-dependent counterpart for a given initial risk-level. For instance, should we know that at least one of these policies does not have an evaluation gap, then as a consequence of the static CVaR upper-bound (Cor.~\ref{cor:dp_geq_static_eval}), we would know that the risk-dependent policies with lowest reported DP value would be without CVaR evaluation gap.
\end{enumerate}

Ultimately, despite the identified limitations regarding uniform optimality, the static CVaR dual decomposition retains significant practical appeal. In contrast to primal-based methods~\citep{Bauerle2011Markov}, which necessitate augmenting states with unbounded cumulative returns, the dual formulation operates within the bounded $(0,1]$ risk interval. This boundedness allows for a standardized, domain-independent discretization strategy that is more tractable in practice. Our analysis provides a foundation for further characterizing, and potentially overcoming, the challenges of risk-averse reinforcement learning. By formally characterizing the evaluation gap, our work identifies the precise constraints that any valid solution must satisfy. This analysis provides the necessary theoretical pathway for future research to derive exact, target-specific dual algorithms that reconcile these computational benefits with rigorous optimality guarantees.

\bibliography{tmlr25}
\bibliographystyle{tmlr}
\clearpage

\appendix

\section{Notation}
\label{app:notation}

\begin{table}[h]
    \centering
    \small
    \renewcommand{\arraystretch}{1.35} % Stretches rows to 1.5 times their default height
    \begin{tabular}{|c|l|}
        \hline
        \textbf{Symbol} & \textbf{Description} \\
        \hline
        \( \mathcal{S} \) & Finite state space \\
        \( \mathcal{A} \) & Finite action space \\
        \( \Delta(\cdot) \) & Probability simplex \\
        \( P \) & Transition function \( P: \mathcal{S} \times \mathcal{A}\mapsto \Delta(\mathcal S) \) \\
        \( \mathcal R \) & Reward function \( \mathcal R: \mathcal{S} \times \mathcal{A} \times \mathcal{S} \mapsto [0, R_{\text{max}}] \) \\
        \( \gamma \) & Discount factor \\
        \( s_0 \) & Initial state \\
         \( T \) & Trajectory horizon \\
         \( H \) & History $(S_0,A_0,S_1,\dots)$ \\
         \( \mathcal H_t \) & Set of histories of length $t$ \\
         \( H_{0:k} \) & $k$-length subsequence of history $H$ \\
         \( \pi_h \) & History-dependent policy $\pi_h:\mathcal H \mapsto \mathcal A$\\
         \( \pi \) & Markovian policy $\pi:\mathcal S \mapsto \mathcal A$\\
         \( P^{\pi_h}(H) \) & Probability of history $H$ when following policy $\pi_h$\\
         \( \mathcal R^H_{0:T} \) & Return of history $H$ \\
         \( Z^{\pi_h} \) & Random return when sampling histories by following policy $\pi_h$ \\
         \( \Xi_\alpha(\mathbb P) \) & CVaR$_\alpha$ risk envelope around probability distribution $\mathbb P$ \\
         \( \xi \) & History-level perturbations $\xi:\mathcal H \mapsto [0, \frac{1}{\alpha}]$, $\xi \in \Xi_\alpha(P^{\pi})$ \\
         \( Y \) & Running risk level $Y\in(0, 1]$ \\
         \( \tilde{\mathcal S}\) & Risk-augmented state space $\tilde{\mathcal S} \coloneq \mathcal S \times (0, 1]$ \\
         \( \tilde \pi \) & Risk-dependent policy $\tilde \pi: \tilde{\mathcal S} \mapsto \mathcal A$ \\
         \( \bm{\tilde \xi} \) & State-level perturbations $\bm{\tilde \xi}(s,a,y) \in \Xi_y(P(\cdot|s,a))$\\
         \( \bm{\tilde \Xi} \) & All valid state-level perturbations $\bm{\tilde \Xi} \coloneq \left\{ \bm{\tilde \xi} | \newline \, \bm{\tilde \xi}(\cdot | s, y, a) \in \cvarEnv{y}{P(\cdot|s,a)} \forall (s, y, a) \in \states \times (0, 1] \times \actions \right\}$ \\
         \( \tilde \pi_h^\alpha \) & History-dependent correspondance of risk-dependent policy $\tilde \pi$ at initial risk level $\alpha \in (0, 1]$\\
         \( \V{\tilde\pi,\bm{\tilde\xi}}{} \) & Policy-perturbations value function $\V{\tilde\pi,\bm{\tilde\xi}}{}: \mathcal S \times \mathcal (0, 1] \mapsto \mathbb R$\\
         \( \zeta_\alpha^{\bm{\tilde\xi}} \) & Perturbation mapping $\zeta_\alpha^{\bm{\tilde\xi}} \in \Xi_\alpha(P^{\pi_h})$ \\
         \( \mathcal Y \) & risk level assignment $\mathcal Y: \mathcal H \mapsto (0, 1]$\\
        \hline
    \end{tabular}
    %\vspace{0.1in}
    \caption{List of notations}
    \label{tab:notations}
\end{table}

\clearpage

\section{Omitted proofs}

We now restate the results from the main body and present their omitted proofs. For every result, we also include a proof outline to provide a clearer picture of the proof's content.

\subsection{Results of Section~\ref{section:two_optim_problems}}

\lemmaDpOptimProblem*

\paragraph{\textit{Proof outline.}}
The result is proven by contradiction. We assume that there exist state-level perturbations yielding a value strictly lower than the minimum defined by the dynamic program at the final time step at any state-risk level pair $(s,y)$. We then demonstrate that due to the recursive nature of the value function (Def.~\ref{def:policy_pertub_value_function}), this strict inequality must propagate backwards through time, ultimately implying that the inequality must hold at time $t=0$, contradicting the uniform initialization of the value function to zero.

\begin{proof}

We prove the result by contradiction. First observe that by definition of $\V{\tilde\pi}{}$, there must exist $\bm{\xi} \in  \bm{\tilde\Xi}$ such that $\V{\tilde \pi}{} \coloneq \V{\tilde\pi, \bm{\tilde\xi}}{}$. Let $\bm{\tilde\xi}\opt\coloneq \argmin_{\bm{\tilde\xi} \in \bm{\tilde\Xi}} \V{\tilde\pi,\bm{\tilde\xi}}{}(s, y)$ and suppose there exists $\bm{\tilde\xi}' \in  \bm{\tilde\Xi}$ and $(s, y)$ such that $\V{\tilde\pi,\bm{\tilde\xi}'}{}(s, y) < \V{\tilde\pi,\bm{\tilde\xi}\opt}{}(s, y)$. Then, by the definition of $\V{\tilde\pi,\bm{\tilde\xi}\opt}{}(s, y)$ (Eq.~\ref{eq:policy_perturb_value_function}), we must have 
\begin{align*}
    \V{\tilde\pi, \bm{\tilde\xi}'}{T}(s, y) < \V{\tilde\pi, \bm{\tilde\xi}\opt}{T}(s, y).
\end{align*}

Let $a = \tilde\pi(s, y)$. Expanding the LHS using the definition of the policy-perturbations value function (Eq.~\ref{eq:policy_perturb_value_function}) and the RHS using the dynamic program definition (Eq.~\ref{eq:dp_cvar_eval}), we get
\begin{align*}
    & \, \sum_{s'} P(s'|s,a) \bm{\tilde\xi}'(s'|s,y,a) \left[\mathcal R(s, a, s') + \gamma \V{\tilde\pi, \bm{\tilde\xi}'}{T-1}(s', y') \right] \\
     < & \, \min_{\tilde\xi \in \cvarEnv{y}{P(\cdot|s,a)}} \sum_{s'} P(s'|s,a) \tilde\xi(s') \left[\mathcal R(s, a, s') + \gamma \V{\tilde\pi, \bm{\tilde\xi}\opt}{T-1}(s', y') \right].
\end{align*}

Since $\bm{\tilde\xi}'(\cdot|s,y,a) \in \cvarEnv{y}{P(\cdot|s,a)}$, the above minimum is upper bounded by substituting $\bm{\tilde\xi}'$ directly:
\begin{align*}
   & \, \min_{\tilde\xi  \in \cvarEnv{y}{P(\cdot|s,a)}} \sum_{s'} P(s'|s,a) \tilde\xi(s') \left[\mathcal R(s, a, s') + \gamma \V{\tilde\pi, \bm{\tilde\xi}\opt}{T-1}(s', y') \right] \\
    \leq & \, \sum_{s'} P(s'|s,a) \bm{\tilde\xi}'(s'|s,y,a) \left[\mathcal R(s, a, s') + \gamma \V{\tilde\pi, \bm{\tilde\xi}\opt}{T-1}(s', y') \right].
\end{align*}
Combining these inequalities and canceling the common terms, we obtain
\begin{align*}
    \sum_{s'} P(s'|s,a) \bm{\tilde\xi}'(s'|s,y,a) \V{\tilde\pi, \bm{\tilde\xi}'}{T-1}(s', y') < \sum_{s'} P(s'|s,a) \bm{\tilde\xi}'(s'|s,y,a) \V{\tilde\pi, \bm{\tilde\xi}\opt}{T-1}(s', y').
\end{align*}

For the above inequality to hold, there must exist at least one $(s', y')$ such that $\V{\tilde\pi, \bm{\tilde\xi}'}{T-1}(s', y') < \V{\tilde\pi, \bm{\tilde\xi\opt}}{T-1}(s', y')$.

Repeating this argument $T$ times implies the existence of $(s, y)$ such that $\V{\tilde\pi, \bm{\tilde\xi}'}{0}(s, y) < \V{\tilde\pi, \bm{\tilde\xi}\opt}{0}(s, y)$, which is impossible because they all are initialized to 0.
\end{proof}

\propositionConnexionDpCvar*

\paragraph{\textit{Proof outline.}}
The proof proceeds in two steps. First, we establish that the mapping $\zeta_\alpha^{\bm{\tilde\xi}}$ produces valid history perturbations (i.e., within the risk envelope) by verifying that the cumulative product of state perturbations satisfies the required probability and boundedness constraints. Second, we prove the equality between the history-level summation and the recursive value function. This is achieved via backward induction, showing that the single-step updates aggregate exactly to the return weighted by the constructed history perturbations.

% We first start by proving $\zeta_\alpha^{\bm{\tilde\xi}} \in \cvarEnv{\alpha}{P^{\tilde\pi_h^\alpha}}$, by establishing that the total product of step-wise perturbations over a history, $\zeta_\alpha^{\bm{\tilde\xi}}(H)$, is part of initial risk envelope, that is $\zeta_\alpha^{\bm{\tilde\xi}}(H) \in [0,1/\alpha]$ and $\sum_{H \in \mathcal H_T} P^{\tilde \pi_h^\alpha}(H)\, \zeta_\alpha^{\bm{\tilde\xi}}(H) = 1$. We then prove the equality using a slightly stronger result, showing that any history originating from any state $s$ respects the equality, so the result applies in particular to $s_0$.

\begin{proof}
We first prove $\zeta_\alpha^{\bm{\tilde\xi}} \in \cvarEnv{\alpha}{P^{\tilde\pi_h^\alpha}}$ and then we prove $\sum_{H \in \mathcal H_T} P^{\tilde\pi_h^\alpha}(H)\, \zeta_\alpha^{\bm{\tilde\xi}}(H)\, \mathcal R_{0:T}^H = \V{\tilde\pi,\bm{\tilde\xi}}{}(s_0, \alpha)$.

\paragraph{Proof of $\zeta_\alpha^{\bm{\tilde\xi}} \in \cvarEnv{\alpha}{P^{\tilde\pi_h^\alpha}}$.}
First observe that, when fixing $\bm{\tilde\xi}$ and $\alpha$, the resulting risk level at time $t$ for history $H \in \mathcal H_T$ is always given recursively by $Y_t(H|\bm{\tilde\xi}, \alpha)= \alpha\prod_{\tau=1}^{t-1}\bm{\tilde\xi}(S_{\tau+1}|S_\tau, A_\tau,Y_\tau)$ with initial condition $Y_0=\alpha$. To alleviate notation, we will mute these dependencies and simply write $Y_t$ when $H$, $\bm{\tilde \xi}$, and $\alpha$ are clear from context. Also note that we can exclude trajectories $H$ such that $P^{\tilde\pi_h^\alpha}(H) = 0$ from our analysis because these do not induce any constraint in the definition of $\cvarEnv{\alpha}{P^{\tilde\pi_h^\alpha}}$ and they are always excluded from the computation of $\V{\tilde\pi,\bm{\tilde\xi}}{}$. As a result, in the rest of the proof we can always assume that we have $A_t=\tilde\pi(S_t, Y_T)$ when iterating over a history's actions.

% \paragraph{$\bm{\zeta_\alpha^{\bm{\tilde\xi}}(H) \in [0, 1/\alpha]}$ (i) :}
For a fixed $H \in \mathcal H_T$ and $\alpha$, we have by definition $\zeta_\alpha^{\bm{\tilde\xi}}(H)=\prod_{t=0}^{T-1} \bm{\tilde\xi}(S_{t+1}|S_t,Y_t,A_t)$, which is nothing more than $Y_T/\alpha$ from our above remark. Because $\bm{\tilde\xi}(S_t|S_{t-1}, Y_{t-1}, A_{t-1}) \in [0, 1/Y_{t-1}]$ for all $t=1,\dots,T$, we also have that $Y_t = Y_{t-1}\bm{\tilde\xi}(S_t|S_{t-1}, Y_{t-1}, A_{t-1}) \in (0, 1]$. Hence we have in particular $Y_T \in (0, 1]$. From the observation that $Y_T= \alpha \zeta_\alpha^{\bm{\tilde\xi}}(H)$, it follows that $\zeta_\alpha^{\bm{\tilde\xi}}(H) \in [0,1/\alpha]$ (i).

%From our definition of $\zeta$ and the risk-update rule relation, we have that the risk $Y_T$ at the end of an history corresponds to the total perturbations on history $H$, i.e. $\zeta_\alpha^{\bm{\tilde\xi}}(H)=Y_T$. Because $\bm{\tilde\xi}(S_{t+1} | S_t, Y_t, A_t) \in [0, 1/Y_t]$, it follows that $Y_{t+1} \in (0, 1]$ for all $t=0,\dots,T$. Applying this result $T$ times, we get $Y_T \in [0, 1/Y_{t-1}] \in [0, 1/Y_{t-2}] \in \dots \in [0, 1/Y_0]$, hence $\zeta_\alpha^{\bm{\tilde\xi}}(H) \in [0, 1/\alpha]$.
% From the definition of $\zeta_\alpha^{\bm{\tilde\xi}}(H) = \prod_{t=0}^{T-1} \bm{\tilde\xi}(S_{t+1} | S_t, Y_t, A_t)$ and the risk-update rule $Y_{t+1} = Y_t \bm{\tilde\xi}(S_{t+1}|S_t, Y_t, A_t)$, we can see that the total perturbations on a history $H$ is related to the final risk level. Unrolling the risk update, we get $Y_T = Y_0 \prod_{t=0}^{T-1} \bm{\tilde\xi}(S_{t+1}|S_t,Y_t,A_t) = \alpha \zeta_\alpha^{\bm{\tilde\xi}}(H)$. By definition, $\bm{\tilde\xi}(S_{t+1}|S_t, Y_t, A_t) \in [0, 1/Y_t]$, which ensures that $Y_{t+1} = Y_t \bm{\tilde\xi}(\cdot) \in (0, 1]$. Since $Y_t \in (0, 1]$ for all $t=1,\dots,T$, we have $Y_T = \alpha \zeta_\alpha^{\bm{\tilde\xi}}(H) \le 1$, which implies $\zeta_\alpha^{\bm{\tilde\xi}}(H) \le 1/\alpha$. The lower bound $\zeta_\alpha^{\bm{\tilde\xi}}(H) \ge 0$ is trivial as each factor in the product is non-negative.

% \paragraph{$\sum_{H \in \mathcal H_T} P^{\tilde\pi_h^\alpha}(H) \zeta_\alpha^{\bm{\tilde\xi}}(H) = 1$:}
% We only consider trajectories w/ probability $>0$ (dismissing the requirement on $A_t$).

To prove $\sum_{H \in \mathcal H_T} P^{\tilde\pi_h^\alpha}(H)\, \zeta_\alpha^{\bm{\tilde\xi}}(H) = 1$, we proceed by backward induction:
\begin{align*}
& \sum_{H \in \mathcal H_T} P^{\tilde \pi_h^\alpha}(H)\, \zeta_\alpha^{\bm{\tilde\xi}}(H) = \sum_{H \in \mathcal H_T} \prod_{t=0}^{T-1} P(S_{t+1}|S_t,A_t)\, \bm{\tilde\xi}(S_{t+1}|S_t, Y_t, A_t)\\
\overset{(a)}{=} &  \sum_{H \in \mathcal H_{T-1}} \sum_{S_T \in \states} \prod_{t=0}^{T-1} P(S_{t+1}|S_t,A_t)\, \bm{\tilde\xi}(S_{t+1}|S_t, Y_t, A_t)\\
%& = \sum_{H \in \mathcal H_{T-1}} \sum_{S_T \in \states} \prod_{t=0}^{T-1} P(S_{t+1}|S_t,A_t) \bm{\tilde\xi}(S_{t+1}|S_t, Y_t, A_t)\\
\overset{(b)}{=} & \sum_{H \in \mathcal H_{T-1}} \prod_{t=0}^{T-2} P(S_{t+1}|S_t,A_t)\, \bm{\tilde\xi}(S_{t+1}|S_t, Y_t, A_t)\sum_{S_T \in \states} P(S_T|S_{T-1},A_{T-1})\, \bm{\tilde\xi}(S_T|S_{T-1},Y_{T-1},A_{T-1})\\
\overset{(c)}{=} & \sum_{H \in \mathcal H_{T-1}} \prod_{t=0}^{T-2} P(S_{t+1}|S_t,A_t)\, \bm{\tilde\xi}(S_{t+1}|S_t, Y_t, A_t)
\end{align*}
For step (a) we used the observation that the set of histories of length $T$ is the set of histories at time $T-1$, upon which we concatenate the action $A_{T-1}$ selected by $\tilde\pi_h^\alpha$ and the possible next states $S_T$. In step (b) we exploited the fact that we can peel off the terms depending on $S_T$ from the product. Step (c) exploited the fact that $\bm{\tilde\xi}(\cdot| s, y, a) \in \cvarEnv{y}{P(\cdot | s, a}$ for all $a=\tilde\pi(s,y)$. Repeating the above manipulation $T$ times yields
\begin{align*}
\sum_{H \in \mathcal H_T} P^{\tilde \pi_h^\alpha}(H)\, \zeta_\alpha^{\bm{\tilde\xi}}(H) = \dots = \sum_{S_1 \in \states} P(S_1|S_0,A_0)\, \bm{\tilde\xi}(S_1|S_0,Y_0,A_0) = 1,
\end{align*}
, where we used the fact that $\bm{\tilde\xi}(\cdot|S_0,Y_0,A_0) \in \cvarEnv{Y_0}{P(\cdot|S_0, A_0}$ for the last equality, yielding the desired property (ii). Combining properties (i) and (ii) proves $\zeta_\alpha^{\bm{\tilde\xi}} \in \cvarEnv{\alpha}{P^{\tilde\pi_h^\alpha}}$.

\paragraph{Proof of $\sum_{H \in \mathcal H_T} P^{\tilde\pi_h^\alpha}(H)\, \zeta_\alpha^{\bm{\tilde\xi}}(H)\, \mathcal R_{0:T}^H = \V{\tilde\pi,\bm{\tilde\xi}}{}(s_0, \alpha)$.}
We now prove the stated equality by proving a slightly more general result. Let $\mathcal H_T^{s}$ denote the set of all histories $H \in \mathcal H_T$ beginning with $S_0=s$. We shall prove that the following result holds for all $s \in \states$ and $\alpha \in (0, 1]$:
\begin{align*}
\sum_{H \in \mathcal H_T^s} P^{\tilde\pi_h^\alpha}(H)\, \zeta_\alpha^{\bm{\tilde\xi}}(H)\, \mathcal R_{0:T}^H = \V{\tilde\pi,\bm{\tilde\xi}}{}(s, \alpha).
\end{align*}
We proceed by induction.
\paragraph{Base case $t=1$.}
Let $\alpha \in (0, 1]$, $s \in \states$ and set $Y_0=\alpha$, we get
\begin{align*}
\V{\tilde\pi,\bm{\tilde\xi}}{1}(s, Y_0)
=& \sum_{s'\in\states} P(s' | s, A_0)\, \bm{\tilde\xi}(s'|s,Y_0,A_0) \left[\mathcal R(s,A_0,s') + \gamma \V{\tilde\pi}{0}(s', y')\right] \\
=& \sum_{s'\in\states} P(s' | s, A_0)\, \bm{\tilde\xi}(s'|s,Y_0,A_0)\, \mathcal R(s,A_0,s') \\
=& \sum_{H \in \mathcal H_1^s} P^{\tilde\pi_h^\alpha}(H)\, \zeta_\alpha^{\bm{\tilde\xi}}(H)\, \mathcal R_{0:1}^H,
\end{align*}
where we used $A_0=\tilde\pi(s,Y_0)$ and $y'=Y_0\, \bm{\tilde\xi}(s'|s,Y_0,A_0)$ to alleviate notation. The result holds for all $s \in \states$ and $\alpha \in (0, 1]$.

\paragraph{Induction step.}
Suppose $\V{\tilde\pi,\bm{\tilde\xi}}{t-1}(s, \alpha) =  \sum_{H \in \mathcal H_{t-1}^s} P^{\tilde\pi_h^\alpha}(H)\, \zeta_\alpha^{\bm{\tilde\xi}}(H)\, \mathcal R_{0:t-1}^H$. Again setting $Y_0=\alpha$, we get
\begin{align*}
\V{\tilde\pi,\bm{\tilde\xi}}{t}(s, Y_0)
=& \sum_{s'\in\states} P(s' | s, A_0)\, \bm{\tilde\xi}(s'|s,Y_0,A_0) \left[\mathcal R(s,A_0,s') + \gamma \V{\tilde\pi}{t-1}(s', y')\right] \\
=& \sum_{s'\in\states} P(s' | s, A_0)\, \bm{\tilde\xi}(s'|s,Y_0,A_0)\, \mathcal R(s,A_0,s') + \gamma \sum_{H' \in \mathcal H_{t-1}^{s'}} P^{\tilde\pi_h^{y'}}(H')\, \zeta_{y'}(\bm{\tilde\xi})(H')\, \mathcal R_{0:t-1}^{H'}\\
%=& \sum_{S'} P(S' | S_0, A_0) \bm{\tilde\xi}(S'|S_0,Y_0,A_0) \mathcal R(S_0,A_0,S') + \gamma P(S' | S_0, A_0) \bm{\tilde\xi}(S'|S_0,Y_0,A_0) \sum_{H \in \mathcal H_{t-1}} P^{\tilde\pi_h^\alpha}(H) \zeta_\alpha^{\bm{\tilde\xi}}(H) \mathcal R_{0:t-1}^H\\
\overset{(a)}{=} & \sum_{s'\in\states} P(s' | s, A_0)\, \bm{\tilde\xi}(s'|s,Y_0,A_0)\left[ \sum_{H' \in \mathcal H_{t-1}^{s'}} P^{\tilde\pi_h^{y'}}(H')\, \zeta_{y'}(\bm{\tilde\xi})(H') \left(\mathcal R(s,A_0,s') + \gamma \mathcal R_{0:t-1}^{H'}\right)\right]\\
\overset{(b)}{=} & \sum_{s'\in\states} \sum_{H' \in \mathcal H_{t-1}^{s'}} P(s' | s, A_0)\, \bm{\tilde\xi}(s'|s,Y_0,A_0)\, P^{\tilde\pi_h^{y'}}(H')\, \zeta_{y'}(\bm{\tilde\xi})(H')  \left(\mathcal R(s,A_0,s') + \gamma \mathcal R_{0:t-1}^{H'}\right)\\
\overset{(c)}{=} & \sum_{H \in \mathcal H_t^s} P^{\tilde\pi_h^\alpha}(H)\, \zeta_\alpha^{\bm{\tilde\xi}}(H)\, \mathcal R_{0:t}^{H},
\end{align*}
where we used $A_0=\tilde\pi(s,Y_0)$ and $y'=y_0\,\bm{\tilde\xi}(s'|s,Y_0,A_0)$ to alleviate notation. In step (a), we exploited the fact that $\sum_{H' \in \mathcal H_{t-1}^{s'}} P^{\tilde\pi_h^{y'}}(H')\, \zeta_{y'}(\bm{\tilde\xi})(H')=1$ to put the $\mathcal R(s, A_0, s')$ term inside the sum. Step (b) used the fact that the $P(s' | s, A_0)$ and $\bm{\tilde\xi}(s'|s,Y_0,A_0)$ terms do not rely on $H'$ to move the summation. Finally, step (c) relied on the fact that all trajectories starting from $S_0=s$ can be expressed as the union over $s'$ of all trajectories trajectories with $S_1=s'$ and updating the necessary probability, perturbations, and reward terms accordingly. %\mathieu{There might be some information to be added, especially around $P^{\tilde\pi_h^{Y'}}$ and $\zeta_{Y'}$ and their manipulation in step (c).} 
The result follows by setting $S=s_0$ and picking the desired $\alpha \in (0, 1]$.
\end{proof}

\subsection{Results of Section~\ref{sec:evaluation_gap}}

\lemmaRealizabilityIffConsistency*

\paragraph{\textit{Proof outline.}}
The proof is constructive for both directions. For the forward direction ($\implies$), given realizable perturbations, we construct the risk assignment $\mathcal{Y}$ recursively from the existing state-level perturbations and verify that this construction inherently satisfies the consistency constraints. For the backward direction ($\impliedby$), we construct the state-level perturbations by taking the ratio of the consistent risk assignments at subsequent steps, showing that the resulting perturbations are valid members of the risk envelope and aggregate to the target history perturbation.

\begin{proof}

($\bm \implies$) Assume $\xi$ is realizable by some $\bm{\tilde\xi}$. Define the assignment $\mathcal{Y}(H_t)$ by recursively computing the risk levels generated when following $\bm{\tilde\xi}$, that is $\mathcal{Y}(H_{0:t+1}) = \mathcal{Y}(H_{0:t}) \cdot \bm{\tilde\xi}(S_{t+1}|S_t, \mathcal{Y}(H_{0:t}), A_t)$ for all $t =0,\dots, T-1$ and initialized at $\mathcal{Y}(H_{0:0})=\alpha$. Because $\xi$ is realizable, it follows that $\mathcal{Y}(H)=\zeta_\alpha^{\bm{\tilde\xi}}(H)=\xi(H)$ for all histories $H \in \mathcal H_T$, establishing the risk propagation constraint. Now observe that by construction we have:
\begin{align*}
    \sum_{s' \in \states} P(s'|S_t, A_t) \frac{\mathcal{Y}(H_{0:t} \cup (A_t,s'))}{\mathcal{Y}(H_{0:t})} = \sum_{s' \in \states} P(s'|S_t, A_t)\, \bm{\tilde\xi}(s'|S_t, \mathcal Y(H_{0:t}), A_t).
\end{align*}
By this observation and the definition of $\bm{\tilde\xi} \in \bm{\tilde\Xi}$, the state-level risk envelope and action selection constraints follow, proving the desired statement.

($\bm \impliedby$) 
%\mathieu{Need to construct $\bm{\tilde\xi}$ such that $\zeta_\alpha^{\bm{\tilde\xi}}=\xi$. We need to also prove $\bm{\tilde\xi} \in \bm{\Xi}(\tilde\pi)$. This requires proving that $\bm{\tilde\xi}(\cdot| S, Y, A) \in \cvarEnv{Y}{P(\cdot | S, A}$ for all risk-augmented states $(S,Y)$ where  $A=\tilde\pi(S,Y)$. We can safely set $\bm{\tilde\xi}(\cdot | S, Y, A)=1$ for other $(S, Y, A)$ combinations to ensure validity.}
Assume a consistent risk level assignment $\mathcal{Y}$ exists. We construct $\bm{\tilde\xi}$ as follows: for any $(s,y,a)$ reached by a history $H$ with $P^{\tilde\pi_h^\alpha}(H)>0$, that is $s=S_t, y=\mathcal{Y}(H_{0:t}), a=A_t$, define $\bm{\tilde\xi}(s'|s,y,a) \coloneq \mathcal{Y}(H_t \cup (a,s'))/y$. For all other $(s,y,a)$, define $\bm{\tilde\xi}(s'|s,y,a)=1$ to trivially ensure it is part of $\cvarEnv{y}{P(\cdot|s,a)}$. By the state-level risk envelope and action selection constraints, it follows that $\bm{\tilde\xi} \in \bm{\tilde\Xi}$. Moreover, for all histories we have $\zeta_\alpha^{\bm{\tilde\xi}}(H) = \prod_{t=0}^{T-1} \frac{\mathcal{Y}(H_{0:t+1})}{\mathcal{Y}(H_{0:t})} = \frac{\mathcal{Y}(H_{0:T})}{\mathcal{Y}(H_{0:0})} = \frac{\alpha \cdot \xi(H)}{\alpha} = \xi(H)$. Thus, $\xi$ is realizable.
\end{proof}

\thmAbsenceOfGap*

\paragraph{\textit{Proof outline.}}
This proof relies on the equivalence established in Lemma~\ref{lemma:realizability_iff_consistency} and the upper bound property from Corollary~\ref{cor:dp_geq_static_eval}. For the forward direction, if the evaluation gap is zero, the optimal history perturbations must be realizable by some state-level perturbations, implying consistency. For the reverse direction, if consistency holds, the optimal history perturbations are realizable. This allows us to exactly match the dynamic program's value between the true static CVaR and the realizable value, proving equality.

\begin{proof}
($\bm \implies$) 
Assume $\cvar{\alpha}{Z^{\tilde\pi_h^\alpha}} = \V{\tilde\pi}{}(s_0, \alpha)$. Let $\bm{\tilde\xi}^\star$ be a minimizer for $\V{\tilde\pi,\bm{\tilde\xi}}{}(s_0, \alpha)$. From Proposition~\ref{proposition:connexion_dp_cvar} we know we can compute $\xi' \coloneq \zeta_\alpha(\bm{\tilde\xi}^\star)$, where $\xi' \in \cvarEnv{\alpha}{P^{\tilde\pi_h^\alpha}}$ are valid trajectory perturbations and $\sum_{H \in \mathcal H_T} P^{\tilde\pi_h^\alpha}(H)\, \xi'(H)\, \mathcal R_{0:T}^H = \V{\tilde\pi,\bm{\tilde\xi}^\star}{}(s_0, \alpha) = \V{\tilde\pi}{}(s_0, \alpha)$. Since we assumed $\cvar{\alpha}{Z^{\tilde\pi_h^\alpha}} = \V{\tilde\pi}{}(s_0, \alpha)$, this means $\xi'$ must be an optimal trajectory perturbations set. Because $\xi'$ is realizable by construction, applying Lemma~\ref{lemma:realizability_iff_consistency} gives the desired result.

($\bm \impliedby$) 
Let $\xi\opt$ be an optimal trajectory perturbations set for which there exists a consistent risk assignment. By Lemma~\ref{lemma:realizability_iff_consistency}, we know $\xi\opt$ is realizable and hence there exists a $\bm{\tilde\xi}' \in \bm{\tilde\Xi}$ such that $\zeta_\alpha(\bm{\tilde\xi}')=\xi\opt$. From Proposition~\ref{proposition:connexion_dp_cvar} and the definition of $\xi\opt$, we have $\V{\tilde\pi,\bm{\tilde\xi}'}{}(s_0,\alpha) = \sum_{H \in \mathcal H_T} P^{\tilde\pi_h^\alpha}(H)\, \xi\opt(H)\, \mathcal R_{0:T}^H = \cvar{\alpha}{Z^{\tilde\pi_h^\alpha}}$. Since $\V{\tilde\pi}{}(s_0, \alpha) = \min_{\bm{\tilde\xi}} \V{\tilde\pi, \bm{\tilde\xi}}{}(s_0, \alpha)$, it must be that $\V{\tilde\pi}{}(s_0, \alpha) \le \V{\tilde\pi,\bm{\tilde\xi}\opt}{}(s_0,\alpha) = \cvar{\alpha}{Z^{\tilde\pi_h^\alpha}}$. Combining this with the opposite inequality from Corollary~\ref{cor:dp_geq_static_eval} yields equality.
\end{proof}

\corHistPolNoGap*

\paragraph{\textit{Proof outline.}}
The result is proven by construction. We show that a Markovian policy can be trivially viewed as a risk-dependent policy where the action choice is independent of the risk level $y$. This renders the action-selection consistency constraints non-binding. Consequently, the standard CVaR decomposition theorem guarantees the existence of a valid risk assignment, ensuring no gap exists.

\begin{proof}

We prove the statement by explicitly proving that $\tilde \pi = \pi$ is the desired policy. First observe that one can obtain a risk assignment $\mathcal Y$ by leveraging the CVaR Decomposition Theorem (Thm.~\ref{thm:cvar_decomposition} $T$ times, for instance by applying the dynamic programming operator in Equation~\ref{def:dp_cvar_eval}. By construction, this risk assignment will satisfy the risk propagation and state-level risk envelope constraints. Considering $\tilde{\pi}=\pi$, the action-selection consistency constraints are also trivially satisfied, ensuring $\mathcal Y$ is a consistent risk assignment mapping from $\tilde \pi$ and $\pi$, hence $\tilde \pi$ does not have a CVaR evaluation gap as per Theorem~\ref{thm:absence_of_gap}.

%The proof is by construction. First initialize $\tilde\pi(s_0,\alpha)=\pi_h(\{s_0\})$ and set $\V{\tilde \pi}{0}(s,y)=0$ for all $s\in \mathcal S$ and $y \in (0, 1]$. Now, for all histories $H \in \mathcal H$ such that $P^$
%First observe that, by Theorem~\ref{thm:cvar_decomposition}, the CVaR at time $t$ can be decomposed into a combination of risk factors at time $t+1$ by solving the single-step optimization over state-level perturbations detailed in Equation~\ref{eq:dp_cvar_eval}. Repeating this process $T$ times starting in $t=0$ By definition of the risk level evolution and state-level perturbations, the risk propagation and state-level risk envelope constraints are satisfied at every time step.
\end{proof}

\subsection{Results of Section~\ref{sec:impossibility_result}}

\propositionConditionUniformOptimality*

\paragraph{\textit{Proof outline.}}
We prove the equivalence by linking the definitions of uniform optimality and action consistency. In the forward direction, uniform optimality implies the policy must match the optimal history-dependent policy for every $\alpha$, which in turn implies matching the actions prescribed by those optimal policies at the reachable risk-augmented states. In the reverse direction, satisfying the constraints ensures the policy mimics the optimal history-dependent policy for every $\alpha$, thereby guaranteeing the CVaR values match.

\begin{proof}
%We prove both directions of the biconditional statement directly.
($\bm{\implies}$) Assume $\tilde\pi$ is uniformly optimal. By definition, for any $\alpha \in (0, 1]$, we have $\cvar{\alpha}{Z^{\tilde\pi_h^\alpha}} = \max_{\pi_h}\cvar{\alpha}{Z^{\pi_h}} = \cvar{\alpha}{Z^{\pi_{h,\alpha}^{\star}}}$. Because we assumed there is only one optimal $\pi_h\opt(\alpha)$, it follows that the history distributions $P^{\tilde \pi_h^\alpha}$ and $P^{\pi_h\opt(\alpha)}$ must match. Because the history distributions only depend on policies in whether the action $A_t$ is selected, it follows that $\pi_h^\alpha$ and $\pi_h\opt(\alpha)$ always select the same action $A_t$ given $H_{0:t}$. Because of the feasibility of the dynamic program decomposition for $\pi_h(\alpha)$ (Corollary~\ref{corollary:markovian_policies_no_gap}), this is equivalent to saying $\tilde{\pi}\big(S_t, \mathcal{Y}_\alpha(H_{0:t})\big) = \pi_{h,\alpha}^{\star}(H_{0:t})$. As this must hold for all $\alpha \in (0, 1]$, $\tilde\pi$ must satisfy the optimal-action-selection constraints.

($\bm{\impliedby}$) Assume $\tilde\pi$ satisfies the optimal-action-selection constraints. This means that for any given $\alpha \in (0, 1]$ and any history $H_{0:t}$ reachable under the optimal policy $\pi_{h,\alpha}^{\star}$, the action chosen by $\tilde\pi_h^\alpha$ is the same as the action chosen by $\pi_{h,\alpha}^{\star}$. This ensures that the two policies are identical, $\tilde\pi_h^\alpha = \pi_{h,\alpha}^{\star}$. Consequently, their induced trajectory distributions are identical, $P^{\tilde\pi_h^\alpha} = P^{\pi_{h,\alpha}^{\star}}$. It follows directly that their CVaR evaluations must also be identical:
\begin{align*}
    \cvar{\alpha}{Z^{\tilde\pi_h^\alpha}} = \cvar{\alpha}{Z^{\pi_{h,\alpha}^{\star}}} = \max_{\pi_h}\cvar{\alpha}{Z^{\pi_h}}.
\end{align*}
Since this holds for all $\alpha \in (0, 1]$, the policy $\tilde\pi$ is uniformly optimal.
\end{proof}

\newpage

\section{Example Risk-dependent Policy without CVaR Evaluation Gap}
\label{appendix:policy_no_evaluation_gap}

After having dedicated a large portion of Section~\ref{sec:evaluation_gap} to present an example where a risk-dependent policy (specifically the one returned by CVaR VI~\cite{Chow2015Risk}) has a CVaR evaluation gap, we now take the time to present a case where no such evaluation gap exists. The purpose of this exercise is to put the novel negative results presented in this article (Thm.~\ref{thm:impossibility}) and previous work \cite{Hau2023Dynamic} in context, highlighting that the evaluation issues do not plague \textit{all} risk-dependent policies.

From Theorem~\ref{thm:absence_of_gap}, to prove the absence of a CVaR evaluation gap it suffices to identify a risk-dependent policy whose risk assignment consistency constraints (Def.~\ref{def:risk_level_assignment_constraints}) have non-empty intersections. We once again use the sample MDP from \citet{Hau2023Dynamic} presented in Figure~\ref{fig:cvar-3actionscounterexample} with $\alpha=0.5$ as the initial risk-level. The policy we will use for our example is the following:
\begin{align*}
  \tilde\pi(s_1, y) = \begin{cases}
    a_3 & \text{if } y \ge 0.5, \\
    a_2 & \text{if } y < 0.5.
  \end{cases} 
\end{align*}
Note that this policy differs from the one returned by CVaR VI (Sec.~\ref{sec:evaluation_gap}) only in how the boundary $y=0.5$ is handled and in that action $a_3$ replaces $a_1$ for low risk levels ($y$  near 1). Because there is always only one path to land in a given state, we will alleviate notation when possible and only use the destination state to define state-level perturbations, for instance using $\bm{\tilde\xi}\opt(s_6|y)$ instead of $\bm{\tilde\xi}\opt(s_6|s_1,a_3,y)$.

\paragraph{Computing the value function.} In order to compute the optimal state-level perturbations $\bm{\tilde\xi}\opt$ for this policy, we proceed backwards from the terminal states. First observe that the perturbations in $s_1$ when taking action $a_3$ can be computed independently of the rest of the MDP. The minimization problem at $(s_1, y, a_3)$ from the value function definition (Eq.~\ref{eq:dp_cvar_eval}) reads
\begin{align*}
  \V{\tilde\pi, a_3}{}(s_1,y) = \min_{\tilde\xi \in \cvarEnv{y}{P(\cdot|s_1,a_3)}} \Big[ 0.5\,\tilde\xi(s_6)(-100) + 0.5\,\tilde\xi(s_7)(400) \Big].
\end{align*}
Using the risk envelope constraints $0.5\,\tilde\xi(s_6)+0.5\,\tilde\xi(s_7)=1$ and $0.5\,\tilde\xi(s_6)\le 1$, we get the optimal perturbations
\begin{align*}
  \bm{\tilde\xi}\opt(s_6|y) = \frac{1}{\max(y, 0.5)} &  & \text{and} & & \bm{\tilde\xi}\opt(s_7|y) = 2 - \frac{1}{\max(y, 0.5)}.
\end{align*}
Since there is only a single state transition possible when taking action $a_2$, we also have $\bm{\tilde\xi}\opt(s_5|y)=1$. Substituting the optimal perturbations back, the resulting action-conditional value functions at $s_1$ are $\V{\tilde\pi, a_2}{}(s_1,y)=0$ and
\begin{align*}
  \V{\tilde\pi, a_3}{}(s_1,y) = \frac{0.5}{\max(y, 0.5)} \cdot (-100) + 0.5\left(2 - \frac{1}{\max(y, 0.5)}\right) \cdot 400 = 400 - \frac{250}{\max(y, 0.5)}.
\end{align*}
We also have $\V{\tilde\pi}{}(s_2,y)=200$ for all $y$, since the only available action in $s_2$ leads deterministically to $s_8$ with reward $200$.

We can now determine the optimal perturbations at $s_0$. Letting $u=\bm{\tilde\xi}(s_1|\alpha)$ with $\bm{\tilde\xi}(s_2|\alpha)=2-u$, the risk level at $s_1$ becomes $\mathcal Y(s_1)=0.5u$. We consider two cases:
\begin{itemize}
    \item \textbf{Case $u \ge 1$:} This gives $\mathcal Y(s_1)\ge 0.5$ and hence the selected action is $a_3$. The value function evaluation $\V{\tilde\pi}{}(s_0,0.5)$ is now equivalent to 
    \begin{align*}
        %f(u) = 0.5u\left(400-\frac{500}{u}\right) + 0.5(2-u)\cdot 200 = 100u - 50,
        f(u) &= (0.5u)\V{\tilde\pi, a_3}{}(s_1,0.5u) + 0.5(2-u) \V{\tilde\pi}{}(s_2,0.5(2-u)) \\
           &= 0.5u\left(400-\frac{250}{0.5u}\right) + 0.5(2-u)\cdot 200 \\
           &= 100u - 50,
    \end{align*}
    which is increasing in $u$ and minimized at $u=1$, giving $f(1)=50$.
    \item \textbf{Case $u < 1$:} This gives $\mathcal Y(s_1) < 0.5$ and hence the selected action is $a_2$. The value function evaluation $\V{\tilde\pi}{}(s_0,0.5)$ is now equivalent to 
    \begin{align*}
      f(u) &= (0.5u)\V{\tilde\pi, a_2}{}(s_1,0.5u) + 0.5(2-u) \V{\tilde\pi}{}(s_2,0.5(2-u)) \\
           &= 0.5u \cdot 0 + 0.5(2-u)\cdot 200 \\
           &= 200 - 100u,
    \end{align*}
    which is decreasing in $u$ and achieves its infimum as $u \to 1^-$, giving $f \to 100$.
\end{itemize}
The global minimum is therefore achieved at $u=1$, yielding $\bm{\tilde\xi}\opt(s_1|\alpha) = \bm{\tilde\xi}\opt(s_2|\alpha) = 1$ and $\V{\tilde\pi}{}(s_0,0.5)=50$. 

\paragraph{Computing the true static CVaR.} The risk level remains $Y_1=0.5$ after the transition from $s_0$, so $\tilde\pi(s_1,0.5)=a_3$. The corresponding history-dependent policy therefore takes action $a_3$ when reaching $s_1$. This produces three histories with non-zero probability:
\begin{align*}
    P^{\tilde\pi_h^{0.5}}(H) = \begin{cases} 0.25 & \text{if } H=(s_0,a_1,s_1,a_3,s_6) \\ 0.25 & \text{if } H=(s_0,a_1,s_1,a_3,s_7) \\ 0.5 & \text{if } H=(s_0,a_1,s_2,a_1,s_8) \end{cases} &&
    \mathcal R^{H} = \begin{cases} -100 & \text{if } H=(s_0,a_1,s_1,a_3,s_6)\\ 400 & \text{if } H=(s_0,a_1,s_1,a_3,s_7)\\ 200 & \text{if } H=(s_0,a_1,s_2,a_1,s_8)\end{cases}
\end{align*}
Solving the static CVaR evaluation (Eq.~\ref{eq:static_cvar_policy_eval}) amounts to minimizing $-100(0.25\xi_1) + 400(0.25\xi_2) + 200(0.5\xi_3)$ subject to $0.25\xi_1+0.25\xi_2+0.5\xi_3=1$ and $\xi_i \in [0,2]$. From straightforward algebraic manipulation, we obtain the optimal solution 
\begin{align*}
    \xi_1^\star=2, && \xi_3^\star=1, && \xi_2^\star=0.
\end{align*}

\paragraph{Verifying the risk-assignment constraints.} We now verify the risk-assignment consistency constraints (Def.~\ref{def:risk_level_assignment_constraints}). Because states never repeat in this MDP, we use the destination state as shorthand for histories.
\begin{enumerate}
    \item \textbf{Risk propagation}: Directly applying trajectory perturbations $\xi\opt$, we get
    \begin{align*}
        \mathcal{Y}(s_0) = 0.5, && \mathcal{Y}(s_6) = 0.5 \cdot 2 = 1, && \mathcal{Y}(s_7) = 0.5 \cdot 0 = 0, && \text{and} &&  \mathcal{Y}(s_8) = 0.5 \cdot 1 = 0.5.
    \end{align*}
    \item \textbf{State-level risk envelope}: For $t=0$, the constraint gives $\mathcal{Y}(s_1) +\mathcal{Y}(s_2) = 1$. For $t=1$, we have
    \begin{itemize}
        \item \textbf{From $s_1$:} we get $\mathcal{Y}(s_6)+\mathcal{Y}(s_7)=2\mathcal{Y}(s_1)$, so $1+0=2\mathcal{Y}(s_1)$, yielding \begin{align*}\mathcal{Y}(s_1)=0.5.\end{align*}
        \item \textbf{From $s_2$:} we get \begin{align*}
            \mathcal{Y}(s_2) = \mathcal{Y}(s_8) =  0.5.
        \end{align*}
    \end{itemize}
    \item \textbf{Action-selection consistency}: The histories through $s_1$ require action $a_3$. According to the policy $\tilde\pi(s_1,y)$, action $a_3$ is selected when $y \ge 0.5$ hence 
    \begin{align*}
        \mathcal{Y}(s_1) \ge 0.5.
    \end{align*}
\end{enumerate}

\def\riskpropagationcolor{brown}
\def\rasenvcolor{violet}
\def\actionselectioncolor{teal}
\begin{figure}[t!]
  \centering
  \begin{tikzpicture}[->,>=stealth,shorten >=1pt,node distance=1.8cm,semithick,level distance=23mm]
    % Define styles for each level for better control and readability
    \tikzstyle{level 1}=[sibling distance=20mm]
    \tikzstyle{level 2}=[sibling distance=14mm]
    \tikzstyle{level 3}=[sibling distance=13mm]

    % Root node of the tree
    \node (s0) [anchor=west, label={left:{
    \begin{tabular}{c}
         $\Big\{\textcolor{\riskpropagationcolor}{\mathcal Y(s_0) = 0.5}\Big\}$ 
    \end{tabular}}}] {$s_0, a_1$} [grow'=right,->]
    % First major branch from s0
    child {
      node (s1) [anchor=west, label={above:{%
        \begin{tabular}{c}
          $\Big\{\textcolor{\actionselectioncolor}{\mathcal Y(s_1) \ge 0.5}\Big\} \cap \Big\{\textcolor{\rasenvcolor}{\mathcal Y(s_1) = 0.5}\Big\}$
        \end{tabular}%
      }}] {$s_1$} 
      child {
        node (s13) {$s_1, a_3$}
        child {
          node (s6) [anchor=west, label={right:{
            \begin{tabular}{c}
                 $\Big\{\textcolor{\riskpropagationcolor}{\mathcal Y(s_6) = 1}\Big\}$ 
            \end{tabular}}}] {$s_6$}
        }
        child {
          node (s7) [anchor=west, label={right:{
            \begin{tabular}{c}
                 $\Big\{\textcolor{\riskpropagationcolor}{\mathcal Y(s_7) = 0}\Big\}$ 
            \end{tabular}}}] {$s_7$}
        }
      }
    }
    % Second major branch from s0
    child { node (s2) [anchor=west, label={below:{
        \begin{tabular}{c}
             $\Big\{\textcolor{\rasenvcolor}{\mathcal Y(s_2) = 0.5}\Big\}$ 
        \end{tabular}}}] {$s_2$}
      child {node (s21) {$s_2,a_1$}
        child {
          node (s8) [anchor=west, label={right:{
        \begin{tabular}{c}
             $\Big\{\textcolor{\riskpropagationcolor}{\mathcal Y(s_8) = 0.5}\Big\}$ 
        \end{tabular}}}] {$s_8$}
        }
      }
    };
\end{tikzpicture}
    \caption[Visual representation of the risk-assignment constraints for a policy without CVaR evaluation gap]{Visual representation of the risk-assignment constraints on the MDP from Figure~\ref{fig:cvar-3actionscounterexample}, with the policy $\tilde\pi(s_1,y)=a_3$ if $y \ge 0.5$ and $a_2$ otherwise, at $\alpha=0.5$. Risk propagation constraints are in \textcolor{\riskpropagationcolor}{brown}, state-level risk envelope constraints are in \textcolor{\rasenvcolor}{purple}, and action selection constraints are in \textcolor{\actionselectioncolor}{teal}.}
    \label{fig:risk_assignment_possibility}
\end{figure}

All three constraint sets are simultaneously satisfiable, as shown in Figure~\ref{fig:risk_assignment_possibility}. By Theorem~\ref{thm:absence_of_gap}, this confirms the absence of a CVaR evaluation gap. 

The absence of CVaR evaluation gap can also be confirmed by computing the actual static CVaR for $\tilde\pi_h^{0.5}$. Using our previously computed optimal history-level perturbations, we get

\begin{align*}
    \cvar{0.5}{Z^{\tilde\pi_h^{0.5}}} = 400\cdot0\cdot0.25+200\cdot1\cdot0.5-100\cdot2\cdot0.25 = 50 = \V{\tilde\pi}{}(s_0,0.5).
\end{align*}
Hence, there is no CVaR evaluation gap for this policy.

Notice that the history-dependent counterpart of $\tilde \pi$ is equivalent to $\pi_h^{(3)}$, the policy that takes action $a_3$ in $s_1$. From Figure~\ref{fig:cvar_optimal_policies}, we know this is actually the optimal policy in $\alpha=0.5$, making our example of a successful DP evaluation also a demonstration that it is possible for a risk-dependent policy to be both a representation of the optimal history-dependent policy and not suffer from a CVaR evaluation gap.

\end{document}